\documentclass{article}

\usepackage[preprint]{icml2026}

\usepackage{microtype}
\usepackage{graphicx}
\usepackage{adjustbox}
\usepackage{subcaption}  
\usepackage{booktabs}
\usepackage{threeparttable} 
\usepackage{placeins}

\usepackage{amsmath,amssymb,amsfonts,amsthm,bm,mathtools}
\usepackage{siunitx}

\usepackage{enumitem}
\usepackage{float} 

\graphicspath{{images/}}

\newcommand{\State}{\STATE}
\newcommand{\Require}{\REQUIRE}
\newcommand{\Ensure}{\ENSURE}

\newcommand{\For}{\FOR}
\newcommand{\EndFor}{\ENDFOR}

\newcommand{\vdelta}{\boldsymbol{\delta}}

\newcommand{\Return}{\textbf{return} }

\newcommand{\Comment}[1]{\hfill\(\triangleright\)\,#1}
\newcommand{\best}[1]{\ifmmode\mathbf{#1}\else\textbf{#1}\fi}

\newcommand{\R}{\mathbb{R}}
\newcommand{\C}{\mathbb{C}}
\newcommand{\E}{\mathbb{E}}
\newcommand{\Var}{\operatorname{Var}}
\newcommand{\Cov}{\operatorname{Cov}}
\newcommand{\KL}{\operatorname{KL}}
\newcommand{\iu}{\mathrm{i}}

\newcommand{\IFFT}{\mathcal{F}^{-1}}
\newcommand{\diag}{\operatorname{diag}}
\newcommand{\trace}{\operatorname{Tr}}
\newcommand{\vx}{\mathbf{x}}
\newcommand{\bW}{\mathbf{W}}
\newcommand{\vw}{\mathbf{w}}
\newcommand{\vy}{\mathbf{y}}
\newcommand{\vz}{\mathbf{z}}
\newcommand{\va}{\mathbf{a}}
\newcommand{\vg}{\mathbf{g}}
\newcommand{\bF}{\mathbf{F}}
\newcommand{\bK}{\mathbf{K}}

\newcommand{\bX}{\mathbf{X}}
\newcommand{\bY}{\mathbf{Y}}
\newcommand{\bI}{\mathbf{I}}
\newcommand{\conj}[1]{#1^{\ast}}
\newcommand{\vb}{\mathbf{b}}
\newcommand{\bU}{\mathbf{U}}
\newcommand{\bh}{\mathbf{h}}
\newcommand{\btheta}{\bm{\theta}}
\DeclareMathOperator{\RFFT}{RFFT}
\DeclareMathOperator{\IRFFT}{IRFFT}

\theoremstyle{plain}
\newtheorem{theorem}{Theorem}
\newtheorem{definition}[theorem]{Definition}
\newtheorem{corollary}[theorem]{Corollary}
\theoremstyle{definition}
\newtheorem{remark}[theorem]{Remark}
\newtheorem{proposition}[theorem]{Proposition}

\icmltitlerunning{Compact Spectral Circulant Layers}

\begin{document}

\twocolumn[
\icmltitle{Compact Circulant Layers with Spectral Priors}

\begin{icmlauthorlist}
\icmlauthor{Joseph Margaryan}{diku}
\icmlauthor{Thomas Hamelryck}{diku,bio}
\end{icmlauthorlist}

\icmlaffiliation{diku}{Department of Computer Science (DIKU), University of Copenhagen, Copenhagen, Denmark}
\icmlaffiliation{bio}{Department of Biology, University of Copenhagen, Copenhagen, Denmark}


\icmlcorrespondingauthor{Joseph Margaryan}{josephmargaryan@gmail.com}

\icmlcorrespondingauthor{Thomas Hamelryck}{thamelry@bio.ku.dk}

\icmlkeywords{Machine Learning, Spectral Methods, Bayesian Deep Learning}

\vskip 0.3in
]
\printAffiliationsAndNotice{}
\begin{abstract}
Critical applications in areas such as medicine, robotics and autonomous systems require compact (i.e., memory efficient), uncertainty-aware neural networks suitable for edge and other resource-constrained deployments. We study compact spectral circulant and block-circulant-with-circulant-blocks (BCCB) layers: FFT-diagonalizable circular convolutions whose weights live directly in the real FFT (RFFT) half (1D) or half-plane (2D). Parameterizing filters in the frequency domain lets us impose simple spectral structure, perform structured variational inference in a low-dimensional weight space, and calculate exact layer spectral norms, enabling inexpensive global Lipschitz bounds and margin-based robustness diagnostics. By placing independent complex Gaussians on the Hermitian support we obtain a discrete instance of the spectral representation of stationary kernels, inducing an exact stationary Gaussian-process prior over filters on the discrete circle/torus. We exploit this to define a practical spectral prior and a Hermitian-aware low-rank-plus-diagonal variational posterior in real coordinates. Empirically, spectral circulant/BCCB layers are effective compact building blocks in both (variational) Bayesian and point estimate regimes: compact Bayesian neural networks on MNIST$\rightarrow$Fashion-MNIST, variational heads on frozen CIFAR-10 features, and deterministic ViT projections on CIFAR-10/Tiny ImageNet, spectral layers match strong baselines while using substantially fewer parameters and with tighter Lipschitz certificates.
\end{abstract}

\section{Introduction}
\label{sec:intro}

Standard convolutional and dense networks can be overconfident and brittle under distribution shift, while Gaussian processes offer clean Bayesian semantics but scale poorly \citep{guo2017calibration,hendrycks2017baseline,rasmussen2006gaussian}. We aim for layers that are simultaneously parameter-efficient, uncertainty-aware, and with controllable spectral norms \citep{miyato2018spectral,bartlett2017spectrally}.

We investigate \emph{spectral circulant/BCCB layers}: FFT-diagonalizable circular convolutions whose trainable parameters are the nonredundant RFFT coefficients (1D half-spectrum / 2D half-plane). This exposes simple spectral masks/decays for compression and enables a diagonal proper complex-Gaussian prior on the Hermitian support, inducing an exact stationary GP prior over discrete filters with closed-form KL terms \citep{bochner1933monotone,rudin1962fourier}. Since these operators are exactly diagonalized by the DFT, their spectral norms follow directly from the frequency response, enabling essentially free Lipschitz diagnostics and margin-based certificates \citep{tsuzuku2018lipschitz}.

\paragraph{Contributions}
We show that by parameterizing circulant-structured neural layers directly in real-FFT coordinates, we can simultaneously leverage classical spectral GP theory, exact operator geometry, and practical low-rank variational inference in a modern deep learning setting.  

\begin{itemize}[leftmargin=*,itemsep=2pt,topsep=2pt]
\item \textbf{Spectral circulant/BCCB parameterization.}
We parameterize 1D circulant and 2D BCCB layers directly by their \emph{nonredundant} real-FFT (RFFT) coefficients. For 1D inputs $\vx\in\mathbb{R}^d$ (output $\vy\in\mathbb{R}^d$), the layer is implemented as
\[
\vy \;=\; \IRFFT\!\big(\bh_{\mathrm{half}} \odot \RFFT(\vx)\big),
\]
where $\bh_{\mathrm{half}}\in\C^{k_{\mathrm{half}}}$ denotes the nonredundant 1D RFFT half-spectrum (the free Fourier coefficients; the remaining coefficients are determined by Hermitian symmetry).
We use $\odot$ for elementwise (Hadamard) multiplication and $\conj{(\cdot)}$ for complex conjugation.
This corresponds to RFFT $\rightarrow$ frequencywise complex multiplication $\rightarrow$ inverse RFFT.
In 2D with multi-channel mixing, let $\bX\in\R^{C_{\mathrm{in}}\times H\times W}$ be the input (with $C_{\mathrm{in}}$ channels) and $\bY\in\R^{C_{\mathrm{out}}\times H\times W}$ the output.
A hat denotes the frequency-domain representation produced by the real 2D FFT, i.e.\ $\widehat{\bX}=\RFFT_2(\bX)$.
We learn the nonredundant half-plane frequency response
$\bK_{\mathrm{half}}\in\C^{C_{\mathrm{out}}\times C_{\mathrm{in}}\times H\times W_{\mathrm{half}}}$, where $W_{\mathrm{half}}=\lfloor W/2\rfloor+1$ and $\bK_{\mathrm{half}}[o,c]\in\C^{H\times W_{\mathrm{half}}}$ couples input channel $c$ to output channel $o$.
The forward pass is
\begin{align}
\widehat{\bX} &= \RFFT_2(\bX), \\
\widehat{\bY}[o] &= \sum_{c=1}^{C_{\mathrm{in}}}\bK_{\mathrm{half}}[o,c]\odot \widehat{\bX}[c], \\
\bY &= \IRFFT_2(\widehat{\bY}), \label{eq:contrib-bccb-forward}
\end{align}

Storing $\bh_{\mathrm{half}}$ / $\bK_{\mathrm{half}}$ (instead of spatial kernels) keeps only the independent Fourier coefficients (the remainder are fixed by Hermitian symmetry), and it avoids computing an FFT of the kernel during training. Band-limits and spectral decays are then implemented by simple masks or frequency-dependent scalings in this parameter space.

\item \textbf{Discrete spectral GP prior and Hermitian-aware SVI.}
We show that independent \emph{proper} (i.e., circularly symmetric; or, equivalently the real part $\Re z$ and imaginary part $\Im z$ are independent with equal variance and $\mathbb{E}[z^2]=0$) complex Gaussian coefficients on the Hermitian support, with a nonnegative variance profile, induce an exact stationary GP prior over discrete filters in 1D and 2D.
We use this to define a diagonal spectral prior and a Hermitian-aware low-rank-plus-diagonal Gaussian variational posterior in \emph{effective real coordinates}, i.e., the vector of free real degrees of freedom obtained by unpacking the nonredundant RFFT coefficients into real/imaginary parts while respecting Hermitian constraints.
This yields closed-form KL terms while capturing dominant cross-frequency/channel correlations.

\item \textbf{Compactness, uncertainty, and Lipschitz geometry.}
In small SVI BNNs on MNIST$\rightarrow$Fashion-MNIST, spectral circulant/BCCB layers deliver competitive ID accuracy, good calibration, and improved ID/OOD entropy separation with 7--72$\times$ fewer parameters than dense baselines. As SVI heads over frozen CIFAR-10 features, spectral circulant heads match dense heads' accuracy with $\approx 3\times$ fewer parameters and substantially smaller Lipschitz bounds and larger certified feature-space radii. Finally, in a small ViT on CIFAR-10, replacing square linear projections by spectral circulant layers improves accuracy and NLL while reducing the number of parameters in the projection layers.
\end{itemize}

\begin{table}[t]
  \centering
  \small
  \caption{What is new compared to prior work \emph{and why it matters}. Directly parameterizing the nonredundant RFFT coefficients aligns the prior/posterior with the trainable parameters in code (enabling simple, closed-form variational terms) and makes exact spectral norms and Lipschitz diagnostics essentially free for circulant/BCCB layers.}
  \label{tab:whats-new}
  \begin{tabular}{p{0.27\linewidth}p{0.67\linewidth}}
    \toprule
    Aspect & This work \\
    \midrule
    Parameterization &
      Circulant/BCCB layers parameterized directly in the \emph{nonredundant} RFFT half-spectrum/half-plane.
      As a consequence, we keep only independent Fourier coefficients (the remainder are fixed by Hermitian symmetry), avoid per-step kernel FFTs, and implement band-limits/decays via simple masks or frequency-dependent scalings. \\[2pt]
    Prior \& inference &
      A diagonal proper complex-Gaussian prior on these spectral coordinates.
      As a consequence, it induces an exact stationary GP prior over discrete filters, and it pairs naturally with a Hermitian-aware low-rank\,+\,diagonal Gaussian variational posterior in effective real coordinates with closed-form KL terms. \\[2pt]
    Geometry \& use cases &
      For FFT-diagonalizable circulant/BCCB operators, layer spectral norms are available in closed form.
      As a consequence, we can compute inexpensive global Lipschitz \emph{upper bounds} and margin-based $\ell_2$ certificates as geometric diagnostics (not robustness guarantees), and we evaluate these layers as compact replacements for square projections in ViT-style architectures. \\
    \bottomrule
  \end{tabular}
\end{table}

We refer to Table \ref{tab:notation} in the Appendix for an overview of the notation and symbols used. 

\section{Related Work}
\label{sec:related}

\paragraph{Structured linear operators and circulant layers.}
Circulant and block-circulant matrices are classical FFT-diagonalizable operators in signal processing \citep{davis1979circulant,gray2006toeplitz,oppenheim2010dtsp}. In deep learning, circulant and block-circulant structure has been used to compress fully-connected and convolutional layers while retaining accuracy \citep{7410684,lavin2016fast,rippel2015spectral}. FFT-domain convolution has also been explored to accelerate convnet training and inference \citep{Mathieu2013FastTO}. More broadly, low-displacement-rank structured matrices subsume Toeplitz/circulant-like families and come with universal-approximation-style guarantees \citep{Zhao2017TheoreticalPF}, and deep diagonal--circulant products have been analyzed and trained effectively \citep{Araujo2019UnderstandingAT,feng2025cdflow,Ding2025ParameterEfficientFW}. These works primarily target compression, fast training, or expressivity, and do not place an explicit GP-compatible prior directly on the nonredundant RFFT parameters optimized in code. They also do not exploit exact FFT diagonalization to obtain closed-form layer spectral norms for inexpensive Lipschitz diagnostics. We instead parameterize directly in the RFFT half/half-plane and endow Fourier coefficients with a GP-compatible spectral prior.

\paragraph{Spectral kernels and Gaussian processes.}
Bochner's theorem characterizes stationary kernels as Fourier transforms of nonnegative spectral measures \citep{bochner1933monotone,rasmussen2006gaussian}. This underpins spectral GP models, random Fourier features, and scalable approximations \citep{rahimi2007random,lazaro2010sparse,wilson2013spectral,wilson2015kissgp}. These constructions are usually phrased at the \emph{kernel} level. Our discrete spectral prior is instead implemented as a weight-space prior directly on the nonredundant RFFT coefficients used by standard real-FFT APIs, so the prior/posterior live on the exact parameters optimized in code. This makes the method a drop-in replacement for FFT-based convolutions (no per-iteration kernel FFTs) and keeps both inference and Hermitian bookkeeping local and transparent: we operate directly on the stored nonredundant RFFT coefficients and enforce Hermitian symmetry via a fixed reconstruction map, rather than relying on implicit constraints or repeated spatial$\rightarrow$frequency transforms of the kernel.

\paragraph{Bayesian neural networks and structured variational inference.}
Bayesian neural networks with Gaussian priors and variational approximations are well studied \citep{neal1996bayesian,hoffman2013stochastic,kingma2014autoencoding,rezende2014stochastic}. Structured posteriors (e.g., matrix Gaussians, low-rank-plus-diagonal) trade flexibility for tractability \citep{louizos2016structured,zhang2018noisy}. Our Hermitian-aware low-rank variational posterior lives on the effective real coordinates of the nonredundant RFFT coefficients and is tailored to the FFT-diagonal structure of circulant/BCCB layers.
As a result, we can model meaningful posterior correlations with modest rank while retaining cheap reparameterized sampling and closed-form KL terms, making structured SVI practical for these layers.

\section{Background: Circulant/BCCB Structure and the FFT}
\label{sec:background-fft-main}

A vector $\vw = (w_0,\ldots,w_{d-1})^\top \in \R^d$ determines a
\emph{circulant} linear map $\mathrm{circ}(\vw) \in \R^{d\times d}$ via
circular shifts (indices modulo $d$):
\[
\mathrm{circ}(\vw)
=
\begin{bmatrix}
w_0 & w_{d-1} & w_{d-2} & \cdots & w_1\\
w_1 & w_0     & w_{d-1} & \cdots & w_2\\
w_2 & w_1     & w_0     & \cdots & w_3\\
\vdots & \vdots & \vdots & \ddots & \vdots\\
w_{d-1} & w_{d-2} & w_{d-3} & \cdots & w_0
\end{bmatrix}.
\]

Equivalently, for $\vx\in\R^d$,
\[
\big(\mathrm{circ}(\vw)\,\vx\big)_t
=
\sum_{s=0}^{d-1} w_{(t-s)\bmod d}\,x_s,
\qquad t=0,\ldots,d-1,
\]
i.e., the map corresponds to a circular convolution.

Let $\bF\in\C^{d\times d}$ be the unitary DFT matrix with
$\bF_{k,t}=\frac{1}{\sqrt{d}}e^{-2\pi\iu kt/d}$, and let
$\widehat{\vw}=\sqrt{d}\,\bF\vw$.
Circulant matrices are exactly diagonal in the Fourier basis
\citep{davis1979circulant,gray2006toeplitz}:
\begin{equation}
\mathrm{circ}(\vw)=\bF^\dagger\diag(\widehat{\vw})\,\bF,
\label{eq:circ-diag-bg}
\end{equation}
so for any $\vx\in\R^d$ we have
\begin{equation}
\mathrm{circ}(\vw)\,\vx
=
\bF^\dagger\big(\widehat{\vw}\odot(\bF\vx)\big),
\label{eq:circ-matvec-bg}
\end{equation}
with $O(d\log d)$ cost using the FFT
\citep{cooley1965fft,oppenheim2010dtsp}.

For real $\vx$ and $\vw$, the spectrum is Hermitian,
$\widehat{w}_{d-k}=\conj{\widehat{w}_k}
$, so we only need the
nonredundant half of size
$k_{\mathrm{half}}=\lfloor d/2\rfloor+1$.
Real FFTs (RFFTs) implement this by returning the half-spectrum.
We treat these nonredundant RFFT coefficients as the primitive
parameters of our 1D spectral-circulant layers.

On a 2D grid with circular boundary conditions, convolutions correspond
to block-circulant-with-circulant-blocks (BCCB) operators that are
diagonalized by the separable 2D DFT \citep{gray2006toeplitz}.
For multi-channel inputs
$\bX\in\R^{C_{\mathrm{in}}\times H\times W}$ and spectral kernels
$\widehat{\bK}(u,v)\in\C^{C_{\mathrm{out}}\times C_{\mathrm{in}}}$,
the frequency-domain representation is
\[
\widehat{\bY}(u,v) = \widehat{\bK}(u,v)\,\widehat{\bX}(u,v),
\]
and the spatial output is recovered by an inverse 2D FFT:
\[
\bY = \IRFFT_2\big(\widehat{\bY}\big),
\]
where $\RFFT_2$ and $\IRFFT_2$ denote the real 2D FFT and its inverse.
While the RFFT half-plane storage itself follows standard real-FFT practice, our key design innovation is to elevate these nonredundant spectral coefficients to the \emph{learnable} parameters and to place both the GP-compatible prior and the Hermitian-aware variational posterior directly in this coordinate system.

\section{Spectral Circulant and BCCB Layers}
\label{sec:method}

We now describe the spectral parameterization of circulant and BCCB layers used throughout the paper. Full derivations and matrix-level expressions are given in App.~\ref{app:fft-background}.

\subsection{1D spectral-circulant layers}

A vector $\vw\in\R^d$ defines a circulant linear map
$\mathrm{circ}(\vw)\in\R^{d\times d}$ via circular convolution.
Let $\bF\in\C^{d\times d}$ be the unitary DFT and
$\widehat{\vw}=\sqrt{d}\,\bF\vw$.
Circulant matrices are exactly diagonal in the Fourier basis
\citep{davis1979circulant,gray2006toeplitz}:
\begin{equation}
\label{eq:circ-diag}
\mathrm{circ}(\vw)=\bF^\dagger \diag(\widehat{\vw})\,\bF.
\end{equation}
For any $\vx\in\R^d$ we therefore have
\begin{equation}
\label{eq:fft-forward-1d}
\mathrm{circ}(\vw)\,\vx
=
\bF^\dagger\big(\widehat{\vw}\odot(\bF\vx)\big),
\end{equation}

For real $\vx$ and $\vw$, the spectrum is Hermitian:
$\widehat{w}_{d-k}=\conj{\widehat{w}_k}$.
Therefore, it suffices to store the nonredundant \emph{half} of size
$k_{\mathrm{half}}=\lfloor d/2\rfloor+1$.
We denote these parameters by $\bh_{\mathrm{half}}\in\C^{k_{\mathrm{half}}}$.
The forward pass then becomes
\begin{equation}
\label{eq:rfft-forward-1d}
\vy=\IRFFT\big(\bh_{\mathrm{half}}\odot \RFFT(\vx)\big),
\end{equation}

where $\RFFT/\IRFFT$ are real FFTs on the half-spectrum.
The usual implementation hides Hermitian bookkeeping inside the FFT
library; our perspective is to treat $\bh_{\mathrm{half}}$ as the
\emph{primitive} parameters of the layer.

This representation has two immediate benefits:
(i) redundant conjugate coefficients are removed, and
(ii) the coordinates $\bh_{\mathrm{half}}$ are exactly those used by our spectral GP prior and variational posterior in Sec.~\ref{sec:gp-vi-main}.

\subsection{2D spectral BCCB layers with channel mixing}

On an $H\times W$ grid with circular boundary conditions, 2D convolutions correspond to BCCB operators that are diagonalized by the separable 2D DFT \citep{gray2006toeplitz}.
For inputs $\bX\in\R^{C_{\mathrm{in}}\times H\times W}$ we parameterize a
complex spectral kernel
\[
\bK_{\mathrm{half}}\in\C^{C_{\mathrm{out}}\times C_{\mathrm{in}}\times H\times W_{\mathrm{half}}}
\]
where $W_{\mathrm{half}}=\lfloor W/2\rfloor+1$, on the RFFT$_2$ half-plane.
Let $\widehat{\bX}=\RFFT_2(\bX)$.
The forward pass is
\begin{align}
\widehat{\bY}[o] &= \sum_{c=1}^{C_{\mathrm{in}}}
  \bK_{\mathrm{half}}[o,c]\odot \widehat{\bX}[c], \nonumber\\
\bY &= \IRFFT_2(\widehat{\bY}), \label{eq:bccb-forward}
\end{align}

with an optional per-output-channel bias added in the spatial domain.

The per-image complexity is
\[
(C_{\mathrm{in}}+C_{\mathrm{out}})HW\log(HW)
\;+\;
C_{\mathrm{in}}C_{\mathrm{out}}HW,
\]
for input/output FFTs and frequency-wise channel mixing, respectively. This is the same complexity as a standard FFT-based convolution, but we avoid per-step kernel FFTs because $\bK_{\mathrm{half}}$ is already stored in the frequency domain.

\label{app:rfft2d-forward}

\begin{algorithm}[t]
  \caption{RFFTCirculant2D forward pass}
  \label{alg:rfft2d-forward}
  \begin{algorithmic}[1]
    \Require $\bX \in \mathbb{R}^{C_{\mathrm{in}}\times H\times W}$ \Comment{input tensor}
    \Require $\bK_{\text{half}} \in \mathbb{C}^{C_{\mathrm{out}}\times C_{\mathrm{in}}\times H\times W_{\mathrm{half}}}$ \Comment{spectral kernel}
    \Require $\vb \in \mathbb{R}^{C_{\mathrm{out}}}$ \Comment{bias}
    \Ensure $\bY \in \mathbb{R}^{C_{\mathrm{out}}\times H\times W}$ \Comment{output tensor}
    \State $\widehat{\bX} \gets \RFFT_2(\bX)$
    \For{$o = 1$ to $C_{\mathrm{out}}$}
      \State $\widehat{\bY}[o] \gets 0$
      \For{$c = 1$ to $C_{\mathrm{in}}$}
        \State $\widehat{\bY}[o] \gets \widehat{\bY}[o] + \bK_{\text{half}}[o,c] \odot \widehat{\bX}[c]$
      \EndFor
    \EndFor
    \State $\bY \gets \IRFFT_2(\widehat{\bY})$
    \State $\bY[o] \gets \bY[o] + \vb[o]$ for all $o$
    \State \Return $\bY$
  \end{algorithmic}
\end{algorithm}

\subsection{Band-limiting}

The spectral parameterization exposes explicit hyperparameters for compactness.

\textbf{Band-limiting.}
In 1D we may activate only the lowest $K\le k_{\mathrm{half}}$ frequencies, zeroing higher ones.
In 2D we apply a radial mask with normalized cutoff $K_{\mathrm{rad}}\in[0,1]$ in frequency coordinates.
These choices directly control the effective capacity of the layer without leaving the RFFT representation.

\textbf{Prior spectrum (Bayesian only).}
When using the spectral GP prior, we parameterize its diagonal variance profile $S(\cdot)$ with a simple low-pass envelope controlled by a slope parameter $\alpha$; see Sec.~\ref{sec:gp-vi-main}.

\subsection{Backpropagation and complexity}
\label{sec:backprop-structure}

Because FFTs are linear (and unitary under our normalization), gradients through spectral circulant/BCCB layers have the same structure as the forward pass \citep{oppenheim2010dtsp}.
For a 1D spectral-circulant layer, gradients with respect to the weights and inputs can be written as circular correlations, implemented via FFTs with elementwise products and conjugation; see App.~\ref{app:fft-background} for explicit formulas.
Consequently, both forward and backward passes scale as $O(d\log d)$ in 1D (and $O(HW\log(HW))$ per channel in 2D), rather than $O(d^2)$ for an unstructured dense $d\times d$ map.

Overall, the dominant cost of both forward and backward passes is in the FFTs and frequencywise channel mixing.
The number of \emph{trainable} parameters, however, is controlled by the number of RFFT coefficients we choose to activate, which is typically much smaller than the number of entries in the corresponding dense matrix or spatial kernel.
This separation---FFT cost vs.\ parameter count---is central to the compression and compactness results in Sec.~\ref{sec:experiments-main}.

\section{Spectral Prior and Hermitian-aware SVI}
\label{sec:gp-vi-main}

We now describe the spectral prior and the variational posterior on the RFFT coefficients. Full proofs and implementation details are deferred to App.~\ref{app:gp-proofs}--\ref{app:vi-details}.

\subsection{Implementation notes}
\label{app:vi-implementation-notes}

Our implementation mirrors the factorization above by combining one custom variational distribution per spectral layer (handling Hermitian reconstruction in effective real coordinates) with a generic mean-field variational distribution for all remaining parameters.
We keep these software-specific details separate from the mathematical description because they do not affect the model definition or the ELBO, but they are useful for reproducibility.
\paragraph{Software.}
We implement Bayesian models and stochastic variational inference using NumPyro \citep{Phan:2019elc} on top of JAX \citep{jax2018github}. The deterministic Vision Transformer experiments are implemented in JAX \citep{jax2018github} using Equinox \citep{kidger2021equinox}. Anonymized code is provided in the supplementary material.

\subsection{Discrete spectral GP prior on filters}

\paragraph{Choice of prior spectrum (spectral slope).}
In experiments we use a simple low-pass envelope for the diagonal spectral prior variance profile:
\[
S(k)=\frac{\sigma_0^2}{1+\rho(k)^\alpha},\qquad \alpha\ge 0,
\]
where $\rho(\cdot)\in[0,1]$ is a normalized (wrapped) frequency radius.
In 1D on $\mathbb{Z}_n$, $\rho(k)=\min\{k,n-k\}/\max\{1,\lfloor n/2\rfloor\}$.
In 2D on $\mathbb{Z}_H\times\mathbb{Z}_W$, $\rho(u,v)=\sqrt{\rho_H(u)^2+\rho_W(v)^2}$ with
$\rho_H(u)=\min\{u,H-u\}/\max\{1,\lfloor H/2\rfloor\}$ and similarly for $\rho_W(v)$.
$\alpha=0$ gives a flat spectrum; larger $\alpha$ increasingly suppresses high frequencies.

Consider the discrete circle $\mathbb{Z}_n$. Let $S:\{0,\ldots,n{-}1\}\to[0,\infty)$ be an \emph{even} nonnegative spectrum with $S(n{-}k)=S(k)$. We sample a Hermitian-symmetric spectrum $F\in\C^n$ by drawing independent \emph{proper} (i.e., circularly symmetric) complex Gaussians on the non-self-conjugate frequencies and real Gaussians on self-conjugate (DC and Nyquist) bins with variance profile $S$, then set
\begin{equation}
\label{eq:ifft-prior}
w_t = \frac{1}{\sqrt{n}}\sum_{k=0}^{n-1}F_k\, e^{2\pi\iu kt/n},\quad t=0,\ldots,n{-}1.
\end{equation}

\begin{theorem}[Discrete spectral GP prior]
\label{thm:gp-main}
Under the construction above, $\vw=(w_0,\ldots,w_{n-1})^\top$ is jointly Gaussian, mean-zero, with stationary covariance
\begin{equation}
\label{eq:gp-cov}
k(\tau)=\Cov(w_t,w_{t+\tau})
=
\frac{1}{n}\sum_{k=0}^{n-1} S(k)e^{2\pi\iu k\tau/n},
\end{equation}
which depends only on the difference $\tau$ modulo $n$.
\end{theorem}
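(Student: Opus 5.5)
Since $\vw = \frac{1}{\sqrt n}\bF^\dagger F$ is a fixed linear map of $F$, the proof splits into three steps: show $\vw$ is real and jointly Gaussian, show its mean is zero, and compute $\E[w_t w_{t+\tau}]$ using the second-order moments of $F$.

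\textbf{Step 1: realness and Gaussianity.} Hermitian symmetry $F_{n-k}=\conj{F_k}$ lets us group the sum in \eqref{eq:ifft-prior} into self-conjugate bins (DC, and Nyquist when $n$ is even) plus pairs $\{k,n-k\}$. For a pair we have
\[
F_k e^{2\pi\iu kt/n}+\conj{F_k}e^{-2\pi\iu kt/n}=2\Re\big(F_k e^{2\pi\iu kt/n}\big).
\]
The self-conjugate bins contribute real Gaussians times $\pm1$. Hence each $w_t$ is a real linear combination of the independent real Gaussian variables $\Re F_k$ and $\Im F_k$ over $k$ in the half-spectrum. So $\vw$ is jointly Gaussian, and its mean is zero because every coordinate is mean-zero.

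\textbf{Step 2: second moments of $F$.} The variance convention needs to be fixed first. For non-self-conjugate $k$ we take $\E|F_k|^2=S(k)$ and, by propriety, $\E[F_k^2]=0$. For self-conjugate $k$ we take $\E[F_k^2]=S(k)$. Independence across the half-spectrum, together with $F_{n-k}=\conj{F_k}$, gives the uniform identity
\[
\E[F_k\conj{F_{l}}]=S(k)\,\delta_{kl}\qquad\text{for all } k,l\in\mathbb{Z}_n .
\]
To see this, sort the pairs $(k,l)$ into cases:
\begin{itemize}
\item If $k,l$ lie in different conjugate orbits, independence and zero mean give $0$.
\item If $l=k$, we get $S(k)$.
\item If $l=n-k\neq k$, then $\E[F_k\conj{F_{n-k}}]=\E[F_k^2]=0$ by propriety.
\end{itemize}
The last case is exactly where the properness assumption is used, and evenness $S(n-k)=S(k)$ makes the identity consistent on both members of a pair.

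\textbf{Step 3: covariance.} Because $\vw$ is real, $\Cov(w_t,w_{t+\tau})=\E[w_{t+\tau}\conj{w_t}]$. Expanding the double sum and applying the identity from Step 2 gives
\[
\frac1n\sum_{k,l}\E[F_k\conj{F_l}]\,e^{2\pi\iu (k(t+\tau)-lt)/n}=\frac1n\sum_k S(k)e^{2\pi\iu k\tau/n}.
\]
This depends only on $\tau$ modulo $n$, which proves stationarity. Since $\Cov(w_t,w_{t+\tau})$ is symmetric in its arguments, the result agrees with $\E[w_t w_{t+\tau}]$ as well, and evenness of $S$ shows the expression is real.

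\textbf{Main obstacle.} The one delicate point is the bookkeeping in Step 2: checking that the cross term between $k$ and $n-k$ vanishes, and that the DC/Nyquist normalization matches $S(k)$ under the same formula. Everything else is linear algebra.
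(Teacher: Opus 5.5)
Your proof is correct and follows the paper's argument: Gaussianity from real-linearity in $(\Re F,\Im F)$, then a double-sum expansion using the second-order structure of Remark~\ref{rem:second}. The one difference is that you expand $\E[w_{t+\tau}\conj{w_t}]$ with the Hermitian moment $\E[F_k\conj{F_l}]=S(k)\delta_{kl}$ (which you verify case by case), whereas the paper expands $\E[w_t w_{t+\tau}]$ with the pseudo-moment $\E[f_k f_\ell]=S(k)\delta_{\ell,n-k}$; your choice is slightly cleaner, since the paper's route actually yields $e^{-2\pi\iu k\tau/n}$ and tacitly needs the evenness of $S$ to reach the stated sign (also, with the unitary $\bF$ one has $\vw=\bF^\dagger F$ without your extra $1/\sqrt{n}$, a harmless slip).
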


Thus any nonnegative even spectrum $S$ induces a stationary GP prior over discrete filters, and sampling independent complex-Gaussian coefficients with variance $S$ in the RFFT half (plus Hermitian completion) implements this prior. A 2D analogue on $\mathbb{Z}_H\times\mathbb{Z}_W$ yields stationary random fields for BCCB filters; see App.~\ref{app:gp-proofs}.

\paragraph{2D extension.}
On a discrete torus $\mathbb{Z}_H\times\mathbb{Z}_W$ we define an even nonnegative spectrum $S(u,v)\ge 0$ with $S(-u,-v)=S(u,v)$ modulo $(H,W)$.
Sampling a Hermitian-symmetric field $F_{u,v}$ with $\E[|F_{u,v}|^2]=S(u,v)$ and
$w_{x,y}=\frac{1}{\sqrt{HW}}\sum_{u,v}F_{u,v}e^{2\pi\iu(ux/H+vy/W)}$ yields a mean-zero Gaussian random field with stationary covariance
\[
\kappa(\tau_x,\tau_y)=\frac{1}{HW}\sum_{u,v}S(u,v)e^{2\pi\iu(u\tau_x/H+v\tau_y/W)}.
\]
This is the natural GP prior over 2D filters for our spectral BCCB layers; details are given in App.~\ref{app:gp-proofs}.

\subsection{Hermitian-aware low-rank variational posterior}

The RFFT half/half-plane stores only the nonredundant complex coefficients. Self-conjugate frequencies are real-valued; non-self-conjugate frequencies contribute real and imaginary parts paired by conjugacy. 
We collect all \emph{free} real degrees of freedom into an effective coordinate vector
\[
\va\in\R^{d_{\mathrm{eff}}}
\]
and define a fixed linear map $T:\R^{d_{\mathrm{eff}}}\to\C^{\text{half}}$ that reconstructs
the complex RFFT coefficients (including self-conjugate bins) from $\va$, and then
completes the full spectrum by Hermitian symmetry before the inverse FFT.

On $\va$ we place a structured Gaussian posterior
\begin{equation}
q(\va)=\mathcal{MVN}\!\Big(\bm\mu,\;\bU\diag(\bm\lambda^2)\bU^\top+\diag(\bm\sigma^2)+\varepsilon \bI\Big).
\end{equation}
Samples are obtained via the reparameterization trick:
\[
\va = \bm\mu + \bU(\bm\lambda\odot\xi) + \bm\sigma\odot\zeta,
\]
with $\xi\sim\mathcal{MVN}(0,\bI_r),\;\zeta\sim\mathcal{MVN}(0,\bI_{d_{\mathrm{eff}}})$,
followed by $\bh_{\mathrm{half}}=T(\va)$ and $\vw=\IRFFT(\bh_{\mathrm{half}})$.
The low-rank term captures global correlations across frequencies and channels; the diagonal part provides per-coordinate flexibility.

Because the spectral GP prior factorizes over frequencies in the complex domain, the induced prior on $a$ is a diagonal Gaussian with known variance entries, obtained by viewing each complex coefficient as a 2D real normal with variance $S/2$ per component and each real coefficient as a 1D normal with variance $S$. The KL divergence
\[
\KL\big(q(\va)\,\|\,p(\va)\big)
\]
therefore reduces to a closed-form KL between a low-rank-plus-diagonal covariance and a diagonal covariance, computed efficiently via the matrix determinant lemma and the Woodbury identity (details in App.~\ref{app:vi-details}). In practice we use a small rank (e.g.\ $r=8$) for all spectral layers.
\begin{algorithm}[t]
  \caption{Hermitian-aware low-rank guide for one spectral layer (SVI)}
  \label{alg:spec-guide}
  \begin{algorithmic}[1]
    \Require variational parameters $(\bm\mu,\bU,\bm\lambda,\bm\sigma)$, jitter $\varepsilon$, prior variances $\bm\tau^2$ on $\va$
    \Require reconstruction map $T:\R^{d_{\mathrm{eff}}}\to\C^{\text{half}}$, optional mask $M$ on the half-spectrum
    \State Sample $\xi\sim\mathcal{MVN}(0,\bI_r)$,\; $\zeta\sim\mathcal{MVN}(0,\bI_{d_{\mathrm{eff}}})$
    \State $\va \gets \bm\mu + \bU(\bm\lambda\odot\xi) + \bm\sigma\odot\zeta$
    \State $\bh_{\text{half}} \gets T(\va)$ \Comment{fills DC/Nyquist as real, packs conjugate pairs}
    \State $\bh_{\text{half}} \gets M \odot \bh_{\text{half}}$ \Comment{optional band-limit / radial mask}
    \State $\vw \gets \IRFFT(\bh_{\text{half}})$ \Comment{$\mathrm{IRFFT}$ (1D) or $\mathrm{IRFFT}_2$ (2D)}
    \State $\Sigma \gets \bU\diag(\bm\lambda^2)\bU^\top + \diag(\bm\sigma^2) + \varepsilon \bI$
    \State $\mathrm{KL} \gets \KL\!\big(\mathcal{MVN}(\bm\mu,\Sigma)\,\|\,\mathcal{MVN}(0,\diag(\bm\tau^2))\big)$
    \Comment{closed form}
    \State \Return sampled weights $\vw$ and prior term $\mathrm{KL}$
  \end{algorithmic}
\end{algorithm}

Algorithm~\ref{alg:spec-guide} summarizes the exact steps used in our implementation: sampling in effective real coordinates, Hermitian-safe reconstruction in RFFT storage, and an analytic Gaussian KL against the diagonal spectral prior.

\section{Lipschitz Bounds and Certificates}
\label{sec:lipschitz-main}

Lipschitz constants and spectral norms are widely used to characterize and control the sensitivity of neural networks to input perturbations, with connections to stability, margin-based generalization bounds, and certified robustness \citep{cisse2017parseval,miyato2018spectral,bartlett2017spectrally,gouk2021regularisation,tsuzuku2018lipschitz}. Here we leverage the fact that circulant and BCCB operators are \emph{exactly} diagonalized by the DFT, so their layer spectral norms can be computed directly from their frequency response (rather than estimated by iterative power methods), making Lipschitz diagnostics essentially free in our parameterization.

The FFT-diagonal structure lets us obtain exact layer spectral norms and simple global Lipschitz bounds.

\paragraph{1D spectral-circulant layers.}
For a 1D spectral-circulant layer,
\[
T_{\btheta}(\vx) \;=\; \IRFFT\!\big(\bh_{\mathrm{half}}(\btheta)\odot\RFFT(\vx)\big),
\]
let $\bh_{\mathrm{full}}(\btheta)\in\C^d$ denote the Hermitian-completed full spectrum corresponding to $\bh_{\mathrm{half}}(\btheta)$, and let $h_k(\btheta)$ denote its $k$th (scalar) entry. Then
\[
T_{\btheta} \;=\; \bF^\dagger \diag\!\big(\bh_{\mathrm{full}}(\btheta)\big)\,\bF.
\]
Since $T_{\btheta}$ is normal with eigenvalues $\{h_k(\btheta)\}_{k=0}^{d-1}$, its operator norm is
\begin{equation}
\label{eq:lipschitz-1d}
\|T_{\btheta}\|_{2\to 2} = \max_k |h_k(\btheta)|,
\end{equation}
which can be evaluated directly from the stored RFFT coefficients (the maximum over the half-spectrum suffices since conjugate pairs share magnitude).

\paragraph{A prior typical-case bound (``bad geometry is unlikely'')}
The equality in Eq.~\eqref{eq:lipschitz-1d} characterizes the \emph{worst-case} Lipschitz behavior of a spectral-circulant layer once its spectrum is fixed.
Under our diagonal spectral prior, however, the nonredundant Fourier coefficients are random and (by construction) independent across active frequency bins, which yields a simple high-probability control on unusually large spectral norms.

\begin{proposition}[Prior high-probability bound for spectral-circulant norms]
\label{prop:prior-lip}
Consider a 1D spectral-circulant layer with an active set of nonredundant RFFT bins $\mathcal{K}$ (after any band-limit mask), with $m=|\mathcal{K}|$.
Under the diagonal spectral prior, the active scalar Fourier coefficients $\{h_k\}_{k\in\mathcal{K}}$ are independent, with non-self-conjugate bins satisfying
$(\Re h_k,\Im h_k)\sim \mathcal{MVN}\!\big(0,\tfrac{S(k)}{2}\bI_2\big)$ (equivalently $h_k\sim\mathcal{CN}(0,S(k))$),
and self-conjugate bins (DC and, if applicable, Nyquist) satisfying $h_k\sim\mathcal{N}(0,S(k))$.

Let $S_{\max}=\max_{k\in\mathcal{K}} S(k)$. Then for any $t>0$,
\[
\Pr\!\Big[\|T\|_{2\to 2} \ge t\Big]
\;\le\;
2m\,\exp\!\Big(-\frac{t^2}{2S_{\max}}\Big),
\]
and consequently for any $\delta\in(0,1)$,
\[
\Pr\! \left[\|T\|_{2\to 2} \le \sqrt{2S_{\max}\log\!\frac{2m}{\delta}}\right]
\;\ge\; 1-\delta.
\]
\end{proposition}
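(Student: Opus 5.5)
Start from Eq.~\eqref{eq:lipschitz-1d}: $\|T\|_{2\to2}=\max_k |h_k|$ over the full Hermitian-completed spectrum. Conjugate pairs share magnitude and masked bins contribute zero, so this equals $\max_{k\in\mathcal{K}}|h_k|$ over the active nonredundant bins. (If every bin is masked then $T=0$ and the claim is trivial.) A union bound then gives
\[
\Pr\big[\|T\|_{2\to2}\ge t\big]\le \sum_{k\in\mathcal{K}}\Pr\big[|h_k|\ge t\big].
\]
This step does not use independence across bins; independence is only needed for the joint description of the prior.

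The remaining task is a per-bin tail bound of the form $\Pr[|h_k|\ge t]\le 2\exp(-t^2/(2S(k)))$. Since $S(k)\le S_{\max}$, this upgrades to $2\exp(-t^2/(2S_{\max}))$, and summing over the $m$ bins gives the first display.

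\emph{Self-conjugate bins.} Here $h_k\sim\mathcal{N}(0,S(k))$. The standard Gaussian (Chernoff) tail gives $\Pr[|h_k|\ge t]\le 2\exp(-t^2/(2S(k)))$ directly.

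\emph{Non-self-conjugate bins.} Here $|h_k|^2=(\Re h_k)^2+(\Im h_k)^2$ with each component $\mathcal{N}(0,S(k)/2)$. So $2|h_k|^2/S(k)\sim\chi^2_2$, which is exponential with mean $2$. This gives the exact tail
\[
\Pr\big[|h_k|\ge t\big]=\exp\!\big(-t^2/S(k)\big)\le 2\exp\!\big(-t^2/(2S(k))\big).
\]
The bound is in fact much looser than needed. The only point needing care is the variance bookkeeping: the complex bin has total variance $S(k)$ split as $S(k)/2$ per component, and that split is exactly what makes the Rayleigh tail dominate the stated one.

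For the high-probability form, set $2m\exp(-t^2/(2S_{\max}))=\delta$, which gives $t=\sqrt{2S_{\max}\log(2m/\delta)}$. This is well defined because $2m/\delta>1$. Then
\[
\Pr[\|T\|\le t]\ \ge\ \Pr[\|T\|<t]\ =\ 1-\Pr[\|T\|\ge t]\ \ge\ 1-\delta.
\]
I expect no real obstacle. The main care points are justifying that the maximum over the full spectrum reduces to the active half-spectrum (Hermitian symmetry plus zero masking), and stating the Gaussian and Rayleigh tails with the correct constants.
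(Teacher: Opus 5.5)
Your proof is correct and follows the paper's argument essentially step for step: the exact exponential tail $\Pr[|h_k|\ge t]=\exp(-t^2/S(k))$ for the proper complex bins, the Gaussian tail $2\exp(-t^2/(2S(k)))$ for the real self-conjugate bins, a common per-bin bound, a union bound over the $m$ active bins using $S(k)\le S_{\max}$, and then the choice $t=\sqrt{2S_{\max}\log(2m/\delta)}$. Your extra bookkeeping is also correct: reducing the maximum over the full spectrum to the active half-spectrum, noting that the union bound needs no independence, and passing from $\Pr[\|T\|<t]$ to $\Pr[\|T\|\le t]$. It only makes explicit what the paper leaves implicit.
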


In words, under the spectral prior the layer Lipschitz constant typically scales like $\sqrt{S_{\max}\log m}$; extremely large operator norms are exponentially unlikely.

\paragraph{2D multi-channel BCCB layers.}
For 2D BCCB layers with multi-channel mixing, the 2D DFT diagonalizes the spatial structure but leaves channel mixing as small matrices. Flattening spatial dimensions, the layer is unitarily equivalent to a block-diagonal map
\[
\begin{aligned}
& \widehat{y}_k = K_{\btheta}(k)\widehat{x}_k, \\
& D_{\btheta} = \mathrm{blockdiag}(K_{\btheta}(k_1),\ldots,K_{\btheta}(k_{HW})),
\end{aligned}
\]
where $K_{\btheta}(k)\in\C^{C_{\mathrm{out}}\times C_{\mathrm{in}}}$ are frequencywise mixing matrices. The operator norm is therefore \citep{sedghi2019singular}
\begin{equation}
\label{eq:lipschitz-2d}
\|T_{\btheta}\|_{2\to 2} = \max_k \sigma_{\max}(K_{\btheta}(k)),
\end{equation}
where $\sigma_{\max}$ denotes the largest singular value. Hermitian symmetry does not change singular values, so the maximum can be taken over the RFFT$_2$ half-plane.

\paragraph{Network-level bounds and certificates.}
For a network $f_{\btheta}$ that is a composition of linear layers and 1-Lipschitz activations (e.g.\ $\tanh$), ignoring biases a standard product bound yields
\begin{align}
\operatorname{Lip}(f_{\btheta})
&\le \widehat{\operatorname{Lip}}(f_{\btheta}), \\
\widehat{\operatorname{Lip}}(f_{\btheta})
&:=
\prod_{\ell\in\mathcal{L}_{\mathrm{spec}}}
\|T_{\ell,\btheta}\|_{2\to 2}
\;\cdot\;
\prod_{\ell\in\mathcal{L}_{\mathrm{dense}}}
\|\mathbf{W}_{\ell}(\btheta)\|_2, \label{eq:netlip}
\end{align}
where for 1D spectral-circulant layers $\|T_{\ell,\btheta}\|_{2\to 2}=\max_k |h_{\ell,k}(\btheta)|$ (Eq.~\eqref{eq:lipschitz-1d})
and for 2D BCCB layers with channel mixing $\|T_{\ell,\btheta}\|_{2\to 2}=\max_k \sigma_{\max}(K_{\ell,\btheta}(k))$ (Eq.~\eqref{eq:lipschitz-2d}).
Under independent spectral priors, Proposition~\ref{prop:prior-lip} yields a simple simultaneous control of these factors; see Corollary~\ref{cor:prior-net-lip} in App.~\ref{app:lip-details}.

Given logits $f_{\btheta}(\vx)\in\R^K$ and true label $y$, the logit margin
\[
m_{\btheta}(\vx,y)=f_{\btheta}(\vx)_y-\max_{k\neq y}f_{\btheta}(\vx)_k
\]
yields a standard global $\ell_2$ robustness certificate
\begin{equation}
\label{eq:cert-radius}
r_{\mathrm{cert}}(x,y;\btheta)
=
\frac{\max\{m_{\btheta}(x,y),0\}}{2\,\widehat{\operatorname{Lip}}(f_{\btheta})}.
\end{equation}
In our Bayesian experiments we sample $\btheta\sim q(\btheta)$, compute $m_{\btheta}$ and $r_{\mathrm{cert}}$ on test inputs, and study their empirical posterior distributions (see App.~\ref{app:lip-details}).

\section{Experiments}
\label{sec:experiments-main}

We evaluate spectral circulant/BCCB layers in three regimes:
(1) small \emph{variational Bayesian} (SVI) models trained from scratch on MNIST (ID) with Fashion-MNIST (OOD),
(2) \emph{variational} (SVI) probabilistic heads on a frozen CIFAR-10 encoder,
and (3) a deterministic Vision Transformer (ViT) on CIFAR-10.

Unless noted otherwise, results are averaged over three seeds.
Full training details, OOD protocol, and additional plots are given in App.~\ref{app:exp-details}.

\subsection{General setup}

For Bayesian models we use reparameterized SVI with Adam, a fixed learning rate and budget shared within each dataset group, and the Hermitian-aware low-rank guide from Sec.~\ref{sec:gp-vi-main} with rank $r=8$ for all spectral layers.
We report in-distribution (ID) accuracy, negative log-likelihood (NLL), and calibration metrics, and use predictive entropy on OOD data to compute AUROC and FPR@95\%TPR.
Formal metric definitions and implementation details follow standard practice \citep{guo2017calibration,hendrycks2017baseline} and are summarized in App.~\ref{app:ood-protocol}.

\subsection{MNIST $\rightarrow$ Fashion-MNIST}
\label{sec:exp-mnist}

We train small SVI BNNs from scratch on MNIST \citep{lecun1998gradient} and evaluate OOD performance on Fashion-MNIST \citep{xiao2017fashionmnist}.
All models share the form
\[
\text{layer} \;\to\; \tanh \;\to\; \text{linear} \;\to\; \text{softmax},
\]
with the hidden layer chosen among:
\emph{Spectral BCCB} (ours), \emph{Spectral Circulant} (ours, 1D on flattened input), Conv2D, and a Dense $D\to D$ layer with $D=28\cdot 28$.

Spectral models are extremely compact: on $28\times 28$ images, spectral BCCB 
and spectral circulant variants use about $8.6$k weights versus $62.8$k for 
Conv2D and $622$k for the dense baseline; see parameter counts in 
App.~\ref{app:param-counts} (Table~\ref{tab:param_counts_mnist}).

Table~\ref{tab:mnist-main-full} summarizes ID accuracy and OOD performance using 
predictive entropy; full metrics are in App.~\ref{app:mnist-details}.

\begin{table}[t]
\centering
\small
\caption{MNIST (ID) and Fashion-MNIST (OOD) with SVI BNNs. Params are trainable weights (excluding biases). AUROC is based on predictive entropy; full metrics are in App.~\ref{app:mnist-details}.}
\label{tab:mnist-main-full}
\setlength{\tabcolsep}{3.5pt}
\begin{tabular}{l r c c c c}
\toprule
Model & Params & Acc  & Brier & ECE & AUROC \\
\midrule
Spectral BCCB      & 8.6k  & 0.92 & 0.12 & 0.02 & 0.81 \\
Spectral Circ.     & 8.6k  & 0.92 & 0.12 & 0.02 & 0.83 \\
Conv2D             & 62.8k & 0.96 & 0.08 & 0.04 & 0.62 \\
Dense              & 622k  & 0.97 & 0.05 & 0.02 & 0.84 \\
\bottomrule
\end{tabular}
\end{table}

Spectral models achieve strong calibration (low Brier/ECE) and clear ID/OOD entropy separation while using 7--72$\times$ fewer parameters than the dense baseline.
Compared with Conv2D under the same SVI budget, spectral BCCB and spectral circulant layers achieve slightly lower accuracy but substantially higher OOD AUROC, indicating more cautious predictions on Fashion-MNIST.

Figure~\ref{fig:mnist-entropy} shows kernel-density estimates (KDEs) of predictive entropy for MNIST (ID) versus Fashion-MNIST (OOD) under the same SVI budget.
In each row, the left panel shows the full entropy range and the right panel shows an identical zoom into the low-entropy region to make ID/OOD separation at high confidence visible.

\begin{figure}[t]
  \centering

  \begin{minipage}[t]{0.49\columnwidth}
    \centering
    \scriptsize
    \makebox[\linewidth]{\textbf{Full range (left)}\hfill\textbf{Zoom (right)}}
    \vskip 0.2em
    \normalsize
    \includegraphics[width=\linewidth]{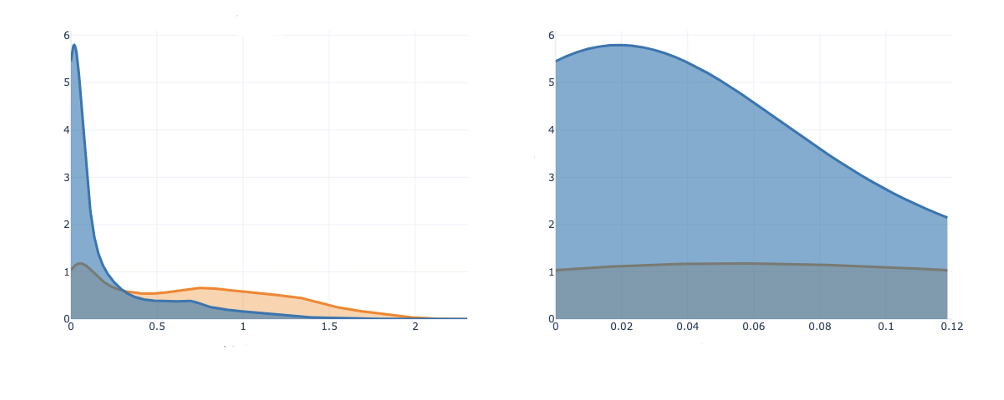}
  \end{minipage}\hfill
  \begin{minipage}[t]{0.49\columnwidth}
    \centering
    \scriptsize
    \makebox[\linewidth]{\textbf{Full range (left)}\hfill\textbf{Zoom (right)}}
    \vskip 0.2em
    \normalsize
    \includegraphics[width=\linewidth]{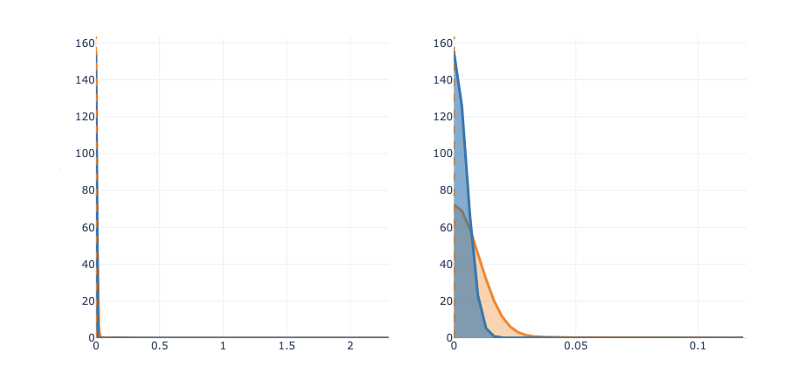}
  \end{minipage}

  \caption{Predictive-entropy KDEs for MNIST (ID; \textbf{blue}) versus Fashion-MNIST (OOD; \textbf{orange}), computed from the SVI posterior predictive. Left: Spectral BCCB (ours). Right: Conv2D baseline. In each image, the left panel shows the full entropy range and the right panel shows the same zoom window $H\in[0,0.12]$ nats.}
  \label{fig:mnist-entropy}
\end{figure}

\subsection{Frozen CIFAR-10 encoder $\rightarrow$ Bayesian heads}
\label{sec:exp-frozen}

We map frozen pooled features $\vz\in\R^{2048}$ to logits in $\R^{10}$.
We compare (i) MLP heads $2048\to h\to 10$ with $h\in\{32,128\}$ (Dense-32/128),
and (ii) a circulant projection $2048\to 2048$ (spatial or spectral) followed by a
linear classifier $2048\to 10$. Concretely, the heads are implemented as either
Dense-32 or Dense-128 (single hidden layer), a spatial circulant layer, or a
spectral circulant layer parameterized in the RFFT half as in
Sec.~\ref{sec:method}.

For the spectral head we vary the active half-spectrum size $K$; here we report the full half-spectrum $K=d/2+1$ and refer to App.~\ref{app:dk-ablation} for a detailed $K$-ablation.

Table~\ref{tab:cifar-heads-main} reports ID and OOD metrics on CIFAR-10 and CIFAR-10-C \citep{hendrycks2019benchmarking} respectively.

\begin{table}[t]
\centering
\small
\setlength{\tabcolsep}{3pt} 
\caption{Frozen CIFAR-10 features $\rightarrow$ SVI heads.
We report CIFAR-10 ID accuracy/NLL/ECE and CIFAR-10-C mean AUROC and FPR@95\%TPR over corruptions and severities.
Higher is better for Acc/AUROC; lower is better for NLL/ECE/FPR@95.
Best values in each column are in bold.}

\label{tab:cifar-heads-main}
\begin{tabular*}{\columnwidth}{@{\extracolsep{\fill}}lccccc}
\toprule
Model & Acc & NLL & ECE & AUROC & FPR@95 \\
\midrule
Dense-32              & 0.971 & \best{0.098} & \best{0.009} & 0.70 & 0.57 \\
Dense-128             & 0.971 & 0.118        & 0.012        & \best{0.71} & \best{0.56} \\
Spectral Circ. (full) & 0.971 & 0.136        & 0.017        & 0.68 & 0.60 \\
Circulant (spatial)   & 0.965 & 0.433        & 0.026        & 0.64 & 0.63 \\
\bottomrule
\end{tabular*}
\end{table}
In weight count, Dense-32 uses 65,856 weights in the head, while the full spectral head uses 22,528 (about $2.9\times$ fewer; see App.~\ref{app:dk-ablation} for counts and the $K$-ablation).

Spectral circulant heads match Dense-32 in ID accuracy and track dense heads closely in NLL and AUROC while using about $3\times$ fewer parameters (see App.~\ref{app:param-counts}).
As $K$ shrinks, performance degrades gracefully: AUROC and FPR@95\%TPR remain stable until the effective spectrum becomes very small, indicating that the learned spectral prior and Hermitian-aware posterior can make efficient use of a limited frequency budget.

Using the Lipschitz bounds from Sec.~\ref{sec:lipschitz-main}, we also inspect the geometry of the heads.
For Dense-32 and the full spectral circulant head, we estimate global $\ell_2$ Lipschitz bounds and certified feature-space radii $r_{\mathrm{cert}}(x,y;\btheta)$ in Eq.~\eqref{eq:cert-radius} under the variational posterior.
Spectral heads consistently exhibit much smaller Lipschitz bounds and larger certified radii than dense heads at comparable predictive performance; full histograms and training trajectories are shown in App.~\ref{app:lip-details}.

\subsection{Vision Transformer with spectral projections}
\label{sec:exp-vit}

Finally, we test spectral circulant layers as drop-in replacements for square projections in a small ViT \citep{dosovitskiy2020image}. We replace all $d_{\mathrm{model}}\!\to\!d_{\mathrm{model}}$ linear maps in attention/MLP blocks by spectral circulant layers (patch embedding and classifier head remain dense) and train with the same recipe as App.~\ref{app:training-setup}. Table~\ref{tab:vit-main} summarizes CIFAR-10 and Tiny ImageNet results.

\begin{table}[t]
\centering
\small
\caption{Dense vs spectral-circulant ViT. CIFAR-10 numbers are from the main experiment; Tiny ImageNet numbers are provided for additional context.}
\label{tab:vit-main}
\begin{tabular}{lccccc}
\toprule
Dataset & Model & Params & Acc & NLL \\
\midrule
CIFAR-10 & Dense ViT    & 1{,}205{,}898 & 0.685 & 0.97 \\
CIFAR-10 & Spectral ViT &   811{,}218   & 0.716 & 0.88 \\
TinyImg  & Dense ViT    & 1{,}248{,}840 & 0.307 & 3.06 \\
TinyImg  & Spectral ViT &   854{,}160   & 0.344 & 2.94 \\
\bottomrule
\end{tabular}
\end{table}

On both datasets the spectral variant uses roughly $1.4$--$1.5\times$ fewer parameters than the dense model and attains better likelihood and accuracy under identical hyperparameters.
Figure~\ref{fig:vit-training} illustrates training dynamics. 

\begin{figure}[t]
  \centering
  \includegraphics[width=\columnwidth]{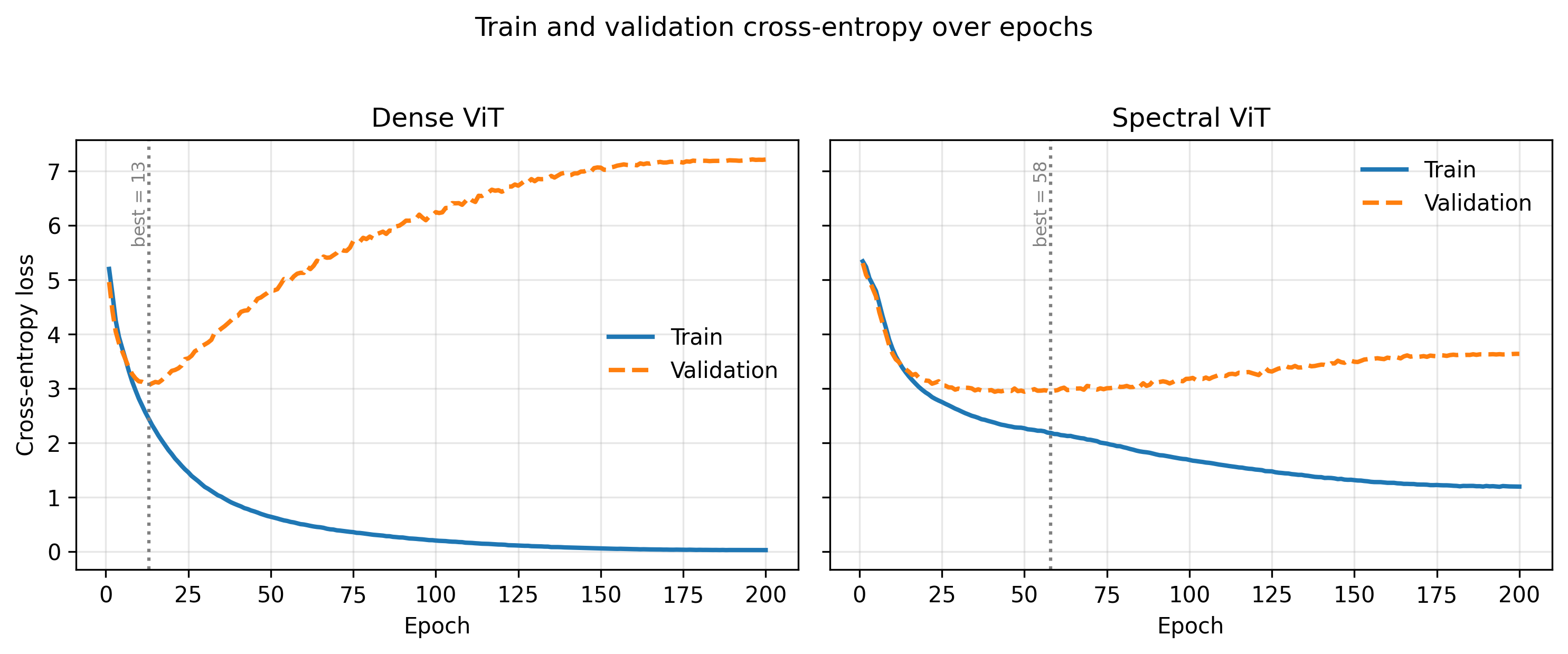}
  \caption{Train and validation cross-entropy over epochs for dense vs spectral ViT on Tiny ImageNet. Vertical lines mark best-validation checkpoints.}
  \label{fig:vit-training}
\end{figure}

On Tiny ImageNet, the spectral ViT reaches its best validation loss substantially later than the dense baseline (Fig.~\ref{fig:vit-training}), consistent with delayed overfitting. Under identical hyperparameters it achieves higher accuracy and lower NLL while using fewer parameters (Table~\ref{tab:vit-main}); additional accuracy--parameter and curvature diagnostics are reported in App.~\ref{app:vit-details} and App.~\ref{app:curvature-details}.

\paragraph{Appendix roadmap.}
 App.~\ref{app:fft-background} derives FFT/BCCB identities and gradients;
App.~\ref{app:gp-proofs} proves the discrete spectral GP result;
App.~\ref{app:vi-details} gives Hermitian bookkeeping and KL computation details;
App.~\ref{app:lip-details} contains Lipschitz/certificate derivations and extra plots;
and App.~\ref{app:exp-details} reports full experimental protocols, ablations, and additional figures.

\section{Discussion and Limitations}
\label{sec:discussion-main}

We parameterize circulant/BCCB layers directly by the nonredundant RFFT coefficients used in standard real-FFT APIs, aligning implementation with (i) a GP-compatible diagonal spectral prior, (ii) a Hermitian-aware structured variational posterior with closed-form KL, and (iii) exact layer spectral norms for cheap Lipschitz diagnostics. Empirically, these layers act as compact replacements for dense/conv projections in Bayesian heads and small transformers.

\paragraph{Limitations.}
Circular boundary conditions can introduce edge artifacts; padding or hybrid designs may mitigate this but break strict BCCB structure. Our spectral prior uses simple radial/1D profiles; learning richer spectra is a natural extension. At CIFAR scale we apply Bayesian inference only to the head; scaling structured SVI to deep spectral stacks remains challenging. Our Lipschitz analysis is global, $\ell_2$-based, and used only as a geometric diagnostic.

\clearpage
\section*{Impact Statement}

This work develops compact spectral layers and associated probabilistic priors for deep networks. The main anticipated impact is methodological: more parameter-efficient and better-calibrated models for standard supervised tasks, with potential downstream benefits in settings where memory or energy budgets are constrained. As with most work on general-purpose representation learning, the techniques here could be used in both socially beneficial and harmful applications. We do not target safety-critical domains directly and do not propose mechanisms that meaningfully change the risk profile of deploying machine learning systems beyond improving calibration and providing coarse global robustness diagnostics. Any use of these methods in high-stakes domains should be accompanied by domain-specific validation, careful monitoring, and alignment with relevant legal and ethical standards.

\bibliographystyle{icml2026}
\bibliography{references}

\clearpage

\appendix

\onecolumn

\section{Notation and Symbols}
\label{app:notation}

We follow the notation conventions used in standard deep learning references \citep{goodfellow2016deep}:
scalars are plain lowercase (e.g., $a,x$), vectors are bold lowercase (e.g., $\vx$), and
matrices/tensors are bold uppercase (e.g., $\bX,\bK$).

\begin{table*}[h]
  \centering
  \small
  \caption{Selected notation used in the main text and appendix.}
  \label{tab:notation}
  \begin{tabular}{ll}
    \toprule
    Symbol & Description \\
    \midrule
     $\R, \C$ & Real and complex numbers \\
    $n,m$ & Scalars (real or integer indices) \\
    $\vx \in \R^d$ & Column vector (e.g., input signal) \\
    $\vw \in \R^d$ & Filter vector (spatial domain) \\
     \midrule
    $\vy, \vz$ & Output or latent vectors \\
    $\bX \in \R^{C_{\mathrm{in}}\times H\times W}$ &
      Input tensor (channels $\times$ height $\times$ width) \\
    $\bY \in \R^{C_{\mathrm{out}}\times H\times W}$ &
      Output tensor \\
    $\bF \in \C^{d\times d}$ &
      Unitary DFT matrix (1D) with entries
      $\bF_{k,t} = \frac{1}{\sqrt{d}}e^{-2\pi\iu kt/d}$ \\
    $\bK(u,v) \in \C^{C_{\mathrm{out}}\times C_{\mathrm{in}}}$ &
      Frequencywise channel-mixing matrix at 2D frequency $(u,v)$ \\
    $\bI$ & Identity matrix \\
    \midrule
    $\mathrm{circ}(\vw) \in \R^{d\times d}$ &
      Circulant matrix generated by $\vw$ \\
    BCCB & Block-circulant-with-circulant-blocks operator (2D circular convolution) \\
    \midrule
    $\widehat{\vw} = \bF\vw$ &
      Discrete Fourier transform (DFT) of $\vw$ \\
    $k_{\mathrm{half}} = \lfloor d/2\rfloor+1$ &
      Size of the nonredundant 1D RFFT half-spectrum \\
    $\bh_{\mathrm{half}}$ &
          Complex RFFT coefficients for a 1D spectral-circulant layer
          (nonredundant half-spectrum) \\
        $\bh_{\mathrm{full}}$ &
          Hermitian-completed full spectrum vector corresponding to $\bh_{\mathrm{half}}$; $h_k$ denotes its $k$th (scalar) entry. \\
    $\bK_{\mathrm{half}}$ &
      Complex RFFT$_2$ coefficients (half-plane) for a 2D BCCB layer \\
    $\RFFT, \IRFFT$ &
      1D real FFT / inverse real FFT operators \\
    $\RFFT_2, \IRFFT_2$ &
      2D real FFT / inverse real FFT operators \\
    
    $\odot$ & Elementwise (Hadamard) product \\
    
    $\Re(\cdot),\,\Im(\cdot)$ & Real and imaginary parts of a complex number \\
    
    $\conj{c}$ & Complex conjugate of $c$ \\
    
    \midrule
    $S(k)$, $S(u,v)$ &
      Nonnegative spectral variance profile (prior) in 1D / 2D \\
    $\alpha$ &
      Spectral slope hyperparameter controlling low-/high-frequency bias \\
    \midrule
    $\E[\cdot]$ & Expectation operator \\
    $\Var(\cdot)$, $\Cov(\cdot,\cdot)$ &
      Variance and covariance \\
    $\KL(\cdot\|\cdot)$ &
      Kullback--Leibler divergence \\
    $\diag(\cdot)$ &
      Diagonal matrix with given entries \\
    $\trace(\cdot)$ &
      Trace of a square matrix \\
    \midrule
    $\mathcal{N}(\mu,\sigma^2)$ &
          Univariate normal on $\R$. \\
    $\mathcal{MVN}(\bm\mu,\Sigma)$ &
          Multivariate normal on $\R^d$. \\
    $\mathcal{CN}(0,S)$ &
          Proper complex Gaussian: $z\sim\mathcal{CN}(0,S)\iff(\Re z,\Im z)\sim\mathcal{MVN}\!\big(0,\tfrac{S}{2}\bI_2\big)$. \\
    $q(\cdot)$, $p(\cdot)$ &
      Variational posterior and prior distributions \\
    $\btheta$ &
          Generic model parameter vector. \\
    $\theta$ &
          Generic scalar parameter / hyperparameter. \\
    $\va \in \R^{d_{\mathrm{eff}}}$ &
      Effective real coordinates for nonredundant RFFT degrees of freedom \\
    \bottomrule
  \end{tabular}
\end{table*}

\FloatBarrier

\section{Background: Circulant/BCCB and the FFT}
\label{app:fft-background}

This appendix collects additional details on circulant/BCCB structure,
FFT-based implementations, and gradients. The basic circulant matrix
schema and its diagonalization are introduced in
Sec.~\ref{sec:background-fft-main}.

\paragraph{Diagonalization by the DFT.}
Let $\vw\in\R^d$ and let $\bF\in\C^{d\times d}$ be the unitary DFT
matrix with $\bF_{k,t}=\frac{1}{\sqrt{d}}e^{-2\pi\iu kt/d}$, and
$\bF^\dagger=\bF^{-1}$ its conjugate transpose.
Writing $\widehat{\vw}=\bF\vw$, circulant matrices are exactly diagonal
in the Fourier basis \citep{davis1979circulant,gray2006toeplitz}:
\begin{equation}
\label{eq:circ_diag_app}
\mathrm{circ}(\vw)
=
\bF^\dagger \,\diag(\widehat{\vw})\, \bF.
\end{equation}
Thus for any $\vx\in\R^d$,
\begin{equation}
\label{eq:fft_forward_app}
\mathrm{circ}(\vw)\,\vx
=
\bF^\dagger\!\big(\,(\bF\vw) \odot (\bF\vx)\,\big),
\end{equation}
where $\odot$ denotes elementwise (Hadamard) product. With the Fast
Fourier Transform (FFT) \citep{cooley1965fft,oppenheim2010dtsp},
\eqref{eq:fft_forward_app} costs $O(d\log d)$.
\paragraph{Real inputs and the real FFT.}
When vectors are real, the spectrum exhibits Hermitian symmetry, so we
can store only the nonredundant $k=0,\ldots,\lfloor d/2\rfloor$
coefficients, halving memory and flops. In that case,
\begin{equation}
\label{eq:rfft_forward_app}
\mathrm{circ}(\vw)\,\vx
=
\IRFFT\!\big(\,\RFFT(\vw) \odot \RFFT(\vx)\,\big),
\end{equation}
with $\RFFT/\IRFFT$ understood as real-valued FFTs on the half-spectrum.
\paragraph{Backpropagation through a circulant layer.}
Consider a (pre-activation) vector
$\va = \mathrm{circ}(\vw)\,\vx$ and a scalar loss $\mathcal{L}$.
Let $\vg=\partial\mathcal{L}/\partial \va$.
Since FFT/IFTT are linear and unitary (under our normalization), the
adjoint maps are again Fourier transforms.
The gradients can be written as circular \emph{correlations} and admit
the same $O(d\log d)$ structure:
\begin{align}
\label{eq:grad_w_app}
\frac{\partial \mathcal{L}}{\partial \vw}
&= \mathrm{corr}(\vg,\vx) \\
&=
\IRFFT\!\big(\,\conj{\RFFT(\vg)}\;\odot\;\RFFT(\vx)\,\big),
\\[4pt]
\label{eq:grad_x_app}
\frac{\partial \mathcal{L}}{\partial \vx}
&= \mathrm{corr}(\vg,\vw) \\
&=
\IRFFT\!\big(\,\conj{\RFFT(\vg)}\;\odot\;\RFFT(\vw)\,\big),
\end{align}
where the asterix denotes complex conjugation.%
\footnote{Convolution uses no conjugation in frequency
(cf.\ Eq.~\eqref{eq:rfft_forward_app}); correlation introduces
conjugation on the first argument, yielding
Eqs.~\eqref{eq:grad_w_app}--\eqref{eq:grad_x_app}. Library-dependent
FFT normalizations change only constant factors.}
Hence the basic computational unit for both forward and backward passes
is
\[
\IRFFT\!\big(\,\RFFT(\cdot)\;(\text{or }\overline{\RFFT(\cdot)})\;\odot\;\RFFT(\cdot)\,\big),
\]
in contrast to $O(d^2)$ dense matvecs in unstructured layers.

\paragraph{2D BCCB operators and multi-channel mixing.}
On an $H\times W$ grid with circular boundary conditions, 2D
convolutions correspond to block-circulant-with-circulant-blocks (BCCB)
operators, diagonalized by the separable 2D DFT
\citep{gray2006toeplitz}. 

For multi-channel inputs
$\bX\in\R^{C_{\mathrm{in}}\times H\times W}$ and spectral kernels
$\widehat{\bK}\in\C^{C_{\mathrm{out}}\times C_{\mathrm{in}}\times H\times (W/2+1)}$
on the RFFT$_2$ half-plane, we have
\[
\widehat{\bY}[o] = \sum_{c=1}^{C_{\mathrm{in}}}\widehat{\bK}[o,c]\odot \widehat{\bX}[c]
\]

and

\[
\bY = \IRFFT_2(\widehat{\bY}),
\]
with $\widehat{\bX}=\RFFT_2(\bX)$. The per-image complexity is
\[
\underbrace{(C_{\mathrm{in}}{+}C_{\mathrm{out}})\,HW\log(HW)}_{\text{I/O FFTs}}
\;+\;
\underbrace{C_{\mathrm{in}}C_{\mathrm{out}}\,HW}_{\text{frequencywise mixing}},
\]
and the same structure carries to gradients by linearity of the FFT.

\section{GP--Equivalence of the Spectral--Circulant Prior}
\label{app:gp-proofs}

This appendix formalizes a key probabilistic fact underlying our spectral prior: drawing independent (proper) complex Gaussian Fourier coefficients with an even nonnegative variance profile and enforcing Hermitian symmetry yields a real-valued stationary Gaussian process on the discrete circle (and analogously a stationary Gaussian random field on the discrete torus). This is a standard instance of the Bochner--Herglotz characterization of positive-definite functions on compact abelian groups; we include the statement and proof in our notation for completeness and to make the GP connection explicit \citep{rudin1962fourier,bochner1933monotone}.

\paragraph{Conventions.}
Indices are taken modulo $n$. We use the unitary inverse DFT
\[
(\IFFT\,\mathbf{f})_t \;=\; \frac{1}{\sqrt n}\sum_{k=0}^{n-1} f_k\,e^{2\pi i kt/n}.
\]

\begin{definition}[Spectral–circulant sampling]\label{def:spectral-sampling}
Let $S:\{0,\ldots,n{-}1\}\to[0,\infty)$ be such that $S(n{-}k)=S(k)$ (even on $\mathbb Z_n$).
Draw independent \emph{proper} complex Gaussians on the Hermitian half:
for $k=1,\ldots,\lfloor (n{-}1)/2\rfloor$ let
$a_k,b_k\stackrel{\text{iid}}{\sim}\mathcal N(0,S(k)/2)$ and set $f_k=a_k+i b_k$;
set $f_0\sim\mathcal N(0,S(0))$ and, if $n$ is even, $f_{n/2}\sim\mathcal N(0,S(n/2))$;
complete by conjugacy $f_{n-k}=\overline{f_k}$. Then $\mathbf w=(w_0,\ldots,w_{n-1})$ with
\[
w_t \;=\; (\IFFT\,\mathbf f)_t \;=\; \frac{1}{\sqrt n}\sum_{k=0}^{n-1} f_k\,e^{2\pi i kt/n}
\]
is real by construction.
\end{definition}

\begin{remark}[Second–order structure]\label{rem:second}
The samples satisfy $\E[f_k\,\overline{f_\ell}]=S(k)\,\delta_{k\ell}$ and
$\E[f_k f_\ell]=S(k)\,\delta_{\ell,n-k}$ for all $k,\ell$.
\end{remark}

\begin{theorem}[GP–equivalence on $\mathbb Z_n$]\label{thm:gp-eq}
$\mathbf w$ is jointly Gaussian, mean zero, with stationary covariance
\[
\begin{aligned}
k(\tau)
 & :=\Cov\!\big(w_t,w_{t+\tau}\big) \\
 & =\frac{1}{n}\sum_{k=0}^{n-1} S(k)\,e^{2\pi i k\tau/n}, \\ 
 & \qquad \tau=0,\ldots,n{-}1.
\end{aligned}
\]
\end{theorem}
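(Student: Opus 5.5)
The plan is to express $\mathbf w$ as a fixed real linear image of a finite vector of independent real Gaussians, and then compute the second moments directly from Remark~\ref{rem:second}.

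\emph{Joint Gaussianity and mean zero.} Collect the independent real variables $f_0$, $a_k,b_k$ for $1\le k\le\lfloor (n-1)/2\rfloor$, and (if $n$ is even) $f_{n/2}$ into a vector $\mathbf g$. This vector is jointly Gaussian with mean zero and diagonal covariance. Pairing the terms $k$ and $n-k$ in Definition~\ref{def:spectral-sampling} and using $f_{n-k}=\overline{f_k}$ gives
\[
w_t=\tfrac{1}{\sqrt n}\Big(f_0+(-1)^t f_{n/2}\,\mathbb 1[n\text{ even}]+2\sum_{k}\big(a_k\cos(2\pi kt/n)-b_k\sin(2\pi kt/n)\big)\Big).
\]
So $\mathbf w=A\mathbf g$ for a fixed real matrix $A$. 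This confirms that $\mathbf w$ is real, and a linear image of a Gaussian vector is jointly Gaussian with mean $A\,\E[\mathbf g]=0$.

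\emph{Covariance.} Since $\mathbf w$ is real, $\Cov(w_t,w_{t+\tau})=\E[w_t\,\overline{w_{t+\tau}}]$. Expanding both sums gives
\[
\E[w_t\overline{w_{t+\tau}}]=\tfrac1n\sum_{k,\ell}\E[f_k\overline{f_\ell}]\,e^{2\pi i(kt-\ell(t+\tau))/n}.
\]
By Remark~\ref{rem:second}, $\E[f_k\overline{f_\ell}]=S(k)\delta_{k\ell}$, so the double sum collapses to $\frac1n\sum_k S(k)e^{-2\pi i k\tau/n}$. Reindexing $k\mapsto n-k$ and using $S(n-k)=S(k)$ turns this into the stated $\frac1n\sum_k S(k)e^{2\pi ik\tau/n}$. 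The result is independent of $t$ and depends on $\tau$ only modulo $n$, which gives stationarity. The same evenness shows the expression is real, as it must be.

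\emph{Main obstacle.} The only real work is checking Remark~\ref{rem:second}, since everything above rests on it. I would verify it case by case.
\begin{itemize}
\item For $k,\ell$ in the free half, independence and properness give $\E[f_k\overline{f_k}]=\E[a_k^2+b_k^2]=S(k)$ and $\E[f_k^2]=\E[a_k^2-b_k^2]+2i\E[a_kb_k]=0$.
\item For the conjugate partners, $\E[f_{n-k}\overline{f_{\ell}}]=\E[\overline{f_k}\,\overline{f_\ell}]$, which is zero unless $\ell=n-k$. In that case it equals $\E[\overline{f_k}f_k]=S(k)=S(n-k)$, using evenness of $S$.
\item For the self-conjugate bins $0$ and $n/2$, the variables are real with variance $S$. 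Here $\ell=n-k$ coincides with $\ell=k$, so both identities hold simultaneously.
\end{itemize}
The expected pitfall is the bookkeeping at these self-conjugate bins. It matters that the DC and Nyquist coefficients are given variance $S$, not $S/2$, so that $\E[|f_k|^2]=S(k)$ holds uniformly. I would state this explicitly before the collapse step.
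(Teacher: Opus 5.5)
Your proof is correct and follows essentially the same route as the paper: Gaussianity from real-linearity of the inverse DFT in the independent real coordinates, then collapsing the double sum via Remark~\ref{rem:second}. You use $\E[f_k\overline{f_\ell}]=S(k)\delta_{k\ell}$ where the paper uses $\E[f_kf_\ell]=S(k)\delta_{\ell,n-k}$, which is equivalent since $\mathbf w$ is real. Your explicit reindexing $k\mapsto n-k$ with $S(n-k)=S(k)$ is genuinely needed: the paper skips it, and its exponent $e^{2\pi i(kt+(n-k)(t+\tau))/n}$ actually simplifies to $e^{-2\pi ik\tau/n}$, not $e^{+2\pi ik\tau/n}$. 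You also supply the case-by-case verification of Remark~\ref{rem:second}, which the paper leaves unproved.
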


\begin{proof}
Gaussianity: $\IFFT$ is real–linear in $(\Re f,\Im f)$, which are jointly Gaussian.
For the covariance, by zero mean and Remark~\ref{rem:second},
\[
\begin{aligned}
k(\tau)
&=\E[w_t w_{t+\tau}] \\
& =\frac{1}{n}\sum_{k,\ell} e^{2\pi i(kt+\ell(t+\tau))/n}\,\E[f_k f_\ell] \\
& =\frac{1}{n}\sum_{k=0}^{n-1} S(k)\,e^{2\pi i(kt+(n-k)(t+\tau))/n} \\
& =\frac{1}{n}\sum_{k=0}^{n-1} S(k)\,e^{2\pi i k\tau/n},
\end{aligned}
\]
since $e^{2\pi i t}=e^{2\pi i \tau}=1$ for integer $t,\tau$.
\end{proof}

\begin{corollary}[Discrete Bochner/Herglotz]\label{cor:bochner}
For any $\mathbf{c}\in\C^n$,
\[
\sum_{t,s}\overline{c_t}c_s\,k(t{-}s)
=\frac{1}{n}\sum_{k=0}^{n-1} S(k)\,\Big|\sum_{s=0}^{n-1} c_s e^{-2\pi i ks/n}\Big|^2\ge 0.
\]
Hence $k\succeq 0$ iff $S\ge 0$.
\end{corollary}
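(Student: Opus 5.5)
The plan is to expand the quadratic form using the closed-form covariance from Theorem~\ref{thm:gp-eq}, exchange the order of summation, and factor the result into a sum of squared moduli. Concretely, substitute
\[
k(t-s)=\frac{1}{n}\sum_{k=0}^{n-1}S(k)\,e^{2\pi i k(t-s)/n}
\]
into $\sum_{t,s}\overline{c_t}c_s\,k(t-s)$. This is legitimate for all integers $t,s$ because the exponential is $n$-periodic, so indices modulo $n$ cause no trouble. Interchanging the finite sums gives
\[
\frac{1}{n}\sum_{k}S(k)\Big(\sum_t \overline{c_t}\,e^{2\pi i kt/n}\Big)\Big(\sum_s c_s\,e^{-2\pi i ks/n}\Big).
\]
The first factor is the complex conjugate of $\sum_t c_t e^{-2\pi i kt/n}$, which is exactly the second factor. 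Their product is therefore $\big|\sum_s c_s e^{-2\pi i ks/n}\big|^2$, and this yields the displayed identity.

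For the equivalence, the ``if'' direction is immediate. When $S\ge 0$, every term is a nonnegative weight times a squared modulus, so the form is $\ge 0$ for every $\mathbf c$, and hence $k\succeq 0$.

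For the ``only if'' direction, I would use test vectors that isolate a single frequency. Fix $k_0$ and take $c_s=e^{2\pi i k_0 s/n}$. Then
\[
\sum_s c_s e^{-2\pi i ks/n}=\sum_s e^{2\pi i (k_0-k)s/n}=n\,\delta_{k k_0},
\]
by orthogonality of the characters on $\mathbb Z_n$. The quadratic form therefore reduces to $\frac1n S(k_0)\,n^2=n\,S(k_0)$. Positive semidefiniteness of $k$ then forces $S(k_0)\ge 0$ for each $k_0$.

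There is one subtlety to flag. In the ``only if'' direction, $k$ should be understood as the function defined by the Fourier formula for an arbitrary real even $S$, not necessarily arising from the sampling construction. The identity itself does not use nonnegativity of $S$, so the argument goes through in that generality.

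I do not expect a genuine obstacle here. The only steps needing care are getting the signs of the exponents right so that the conjugate pairing works, and invoking the orthogonality relation $\sum_{s=0}^{n-1}e^{2\pi i m s/n}=n\,\delta_{m\equiv 0}$.
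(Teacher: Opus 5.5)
Your proposal is correct and follows the argument the paper has in mind: the paper gives no separate proof, and its displayed identity comes from exactly your computation of substituting the covariance formula from Theorem~\ref{thm:gp-eq} and factoring the double sum into $|\sum_s c_s e^{-2\pi i ks/n}|^2$. Your character test vectors $c_s=e^{2\pi i k_0 s/n}$, which give $nS(k_0)\ge 0$, supply the ``only if'' direction the paper leaves implicit, and you rightly note that this direction requires $S$ to be an arbitrary real profile rather than the nonnegative one assumed in Definition~\ref{def:spectral-sampling}.
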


\paragraph{2D extension (BCCB).}
Let $S(u,v)\!\ge\!0$ on $\{0,\ldots,H{-}1\}\times\{0,\ldots,W{-}1\}$ with $S(-u,-v)=S(u,v)$ modulo $(H,W)$.
Sample a Hermitian–symmetric field $\{f_{u,v}\}$ with $\E[|f_{u,v}|^2]=S(u,v)$ and set
\[
w_{x,y}=\frac{1}{\sqrt{HW}}\sum_{u=0}^{H-1}\sum_{v=0}^{W-1}
f_{u,v}\,e^{2\pi i (ux/H+vy/W)}.
\]
Then $(w_{x,y})$ is mean–zero Gaussian with stationary covariance
\[
\kappa(\tau_x,\tau_y)
=\frac{1}{HW}\sum_{u=0}^{H-1}\sum_{v=0}^{W-1}
S(u,v)\,e^{2\pi i (u\tau_x/H+v\tau_y/W)}.
\]

\section{Variational Inference Details}
\label{app:vi-details}

We now describe the variational posterior used for spectral layers.
The design goals are:
(i) respect the support implied by Hermitian symmetry (real DC/Nyquist, conjugate pairs);
(ii) model correlated posterior uncertainty across frequencies and channels; and
(iii) retain closed-form KLs against the diagonal spectral prior from Sec.~\ref{sec:gp-vi-main}, so the ELBO stays simple.

\paragraph{Effective coordinates.}
For a given spectral layer, let $\mathbf{f}_{\mathrm{half}}$ denote the nonredundant complex
RFFT coefficients on the Hermitian half-spectrum (1D) or half-plane (2D) used by the implementation.
Only a subset of real and imaginary parts are free: self-conjugate frequencies
(DC and Nyquist, when present; and the corresponding boundary frequencies in 2D) are real-valued,
with imaginary parts deterministically zero. We collect all free real degrees of freedom into
an effective coordinate vector $\va \in \R^{d_{\mathrm{eff}}}$ and define a fixed linear map
\[
T:\R^{d_{\mathrm{eff}}}\to \C^{\text{half}},\qquad \mathbf{f}_{\mathrm{half}} = T(\va),
\]
which packs $\va$ into the RFFT storage format (inserting the real self-conjugate bins and forming
real/imaginary pairs for the remaining frequencies). The corresponding spatial filter is recovered by
\[
\vw = \IRFFT(\mathbf{f}_{\mathrm{half}})
\qquad
(\text{or } \bW = \IRFFT_2(\mathbf{f}_{\mathrm{half}}) \text{ in 2D).}
\]
All maps are fixed and linear.

\paragraph{Hermitian-aware low-rank Gaussian.}
On $\va$ we choose a low-rank\,+\,diagonal Gaussian:
\begin{equation}
\label{eq:lowrank-family}
q(\va)\;=\;\mathcal{MVN}\!\Big(\bm\mu,\;\bU\,\diag(\bm\lambda^2)\bU^\top\;+\;\diag(\bm\sigma^2)\;+\;\varepsilon \bI\Big).
\end{equation}
with rank $r$ (we use $r{=}8$), mean $\bm\mu\in\mathbb{R}^{d_{\mathrm{eff}}}$,
factor-loading matrix $\bU\in\mathbb{R}^{d_{\mathrm{eff}}\times r}$,
scale vector $\bm\lambda\in\mathbb{R}^r$, diagonal variance $\bm\sigma^2\in\mathbb{R}^{d_{\mathrm{eff}}}$,
and small jitter $\varepsilon>0$ for numerical stability.
Samples are obtained via a standard reparameterization:
\[
\va \;=\; \bm\mu \;+\; \bU(\bm\lambda\odot \xi)\;+\;\bm\sigma\odot \zeta,
\quad
\xi\sim\mathcal{MVN}(0,\bI_r),\;\zeta\sim\mathcal{MVN}(0,\bI_{d_{\mathrm{eff}}}),
\]
followed by $\mathbf{f}_{\mathrm{half}}=T(\va)$ and $\vw=\IRFFT(\mathbf{f}_{\mathrm{half}})$. (or $\bW=\IRFFT_2(\mathbf{f}_{\mathrm{half}})$ in 2D).
The low-rank term $\bU(\bm\lambda\odot\xi)$ captures global correlations across frequencies and channels; the diagonal term $\bm\sigma\odot\zeta$ provides per-coordinate flexibility.

\paragraph{Compatibility with the spectral GP prior.}
Recall that the prior in Sec.~\ref{sec:gp-vi-main} factorizes over frequencies in the complex domain.
Each non-self-conjugate frequency has a proper complex Gaussian prior with variance $S(k)$ (or $S(u,v)$ in 2D), while self-conjugate bins are real with variance $S$.
Because $T$ is linear, the induced prior on effective coordinates $\va$ is again Gaussian with a \emph{known} diagonal covariance $C_{\text{prior}}$.
For practical purposes, one can view each complex bin as a 2D real normal with covariance $\frac{S}{2}I_2$, and each real bin as 1D normal with variance $S$.
The KL term
\[
\KL\big(q(\va)\,\|\,p(\va)\big)
\]
in the ELBO is then a closed-form KL between Gaussians with a low-rank\,+\,diagonal covariance and a diagonal covariance.
We compute it using the matrix determinant lemma and the Woodbury identity.
This keeps the prior term analytic and cheap.

\paragraph{Spectral hyperparameters.}
When learning a spectral slope $\alpha$ (Sec.~\ref{sec:method}) we introduce an unconstrained scalar $\alpha_z$ and a monotone map $\alpha=g(\alpha_z)\in[0,\infty)$, e.g.\ a softplus.
We either:
(i) treat $\alpha_z$ as part of a base mean-field guide (with its own Gaussian factor), or
(ii) include it directly in the layer-specific guide.
In both cases the prior on $\alpha_z$ is Gaussian and contributes another closed-form Gaussian KL term.

\paragraph{Variational distribution factorization.}
In a multi-layer network the full variational posterior factorizes as
\[
q(\btheta)\;=\;\prod_{\ell \in \mathcal{L}_{\text{spec}}} q_\ell(\va_\ell)\;
\times\;
q_{\mathrm{base}}\big(\btheta\setminus \{ \text{spectral sites}\}\big),
\]
where each $q_\ell$ is a Hermitian-aware low-rank Gaussian over the spectral layer $\ell$,
and $q_{\mathrm{base}}$ is a standard mean-field Gaussian over all remaining parameters (e.g.\ non-spectral weights, biases, and possibly spectral hyperparameters).
This separation keeps the Hermitian bookkeeping local to each spectral layer while leaving the remaining parameters in a simple variational family.

\paragraph{SVI objective and update.}
Let $\mathcal{D}$ denote the data and $\btheta$ all parameters, including spectral and nonspectral ones.
Our ELBO is
\[
\mathcal{L}(\phi) =
\E_{q_\phi(\btheta)}\!\big[\log p(\mathcal D \mid \btheta)\big] 
- \sum_{\ell\in \mathcal{L}_{\text{spec}}}
  \KL\!\big(q_\phi(\va_\ell)\,\|\,p(\va_\ell)\big) 
- \KL\!\big(q_\phi(\btheta_{\mathrm{base}})\,\|\,p(\btheta_{\mathrm{base}})\big).
\]

where $\phi$ collects all variational parameters and $p(\va_\ell)$ is the spectral GP prior in effective coordinates.
We estimate the expectation with the reparameterization trick, and compute the Gaussian KL terms in closed form.

A single SVI step for one minibatch proceeds as:
\begin{enumerate}[leftmargin=*, itemsep=1pt]
\item For each spectral layer $\ell$:
sample $\va_\ell\sim q_\phi(\va_\ell)$,
compute $\mathbf{f}_{\ell,\mathrm{half}}=T(\va_\ell)$ and $\vw_\ell=\IRFFT(\mathbf{f}_{\ell,\mathrm{half}})$ (or $\IRFFT_2$ in 2D).
\item Sample nonspectral parameters from $q_\phi(\btheta_{\mathrm{base}})$.
\item Run a forward pass through the network with these sampled parameters to evaluate $\log p(\mathcal D\,|\,\btheta)$.
\item Accumulate the closed-form KLs and update $\phi$ with stochastic gradients of the ELBO.
\end{enumerate}

\paragraph{Complexity.}
For a spectral layer with $d_{\mathrm{eff}}$ effective coordinates and rank $r$, the parameter cost of~\eqref{eq:lowrank-family} is $O(d_{\mathrm{eff}}r + d_{\mathrm{eff}})$; sampling and KL evaluation are $O(d_{\mathrm{eff}}r)$.
These costs depend on the number of Fourier coefficients, not on the spatial grid, and are dominated by the FFTs in the forward/backward passes.
We found $r{=}8$ to be a robust default across all experiments.

\section{Lipschitz Bounds and Margin Certificates: Details}
\label{app:lip-details}

The FFT-diagonal parameterization of our layers gives exact spectral norms essentially ``for free''.
We now spell out the resulting global $\ell_2$ Lipschitz bounds and the margin-based certificates used in the experiments, and we provide the proof of the prior high-probability bound from the main text.

\paragraph{1D spectral-circulant layers.}
Consider a 1D spectral-circulant layer with half-spectrum
$\bh_{\text{half}}\in\C^{k_{\text{half}}}$, acting on $\vx\in\R^d$ as
\begin{equation}
T_{\btheta}\vx \;=\; \IRFFT\!\big(\bh_{\text{half}}(\btheta)\odot \RFFT(\vx)\big).
\end{equation}
Let $\bF$ denote the unitary DFT matrix on $\C^d$ and
$\bh_{\mathrm{full}}(\btheta)\in\C^d$ the Hermitian-completed full spectrum corresponding to $\bh_{\text{half}}(\btheta)$, with scalar entries $h_k(\btheta)$.
Then
\begin{equation}
T_{\btheta} \;=\; \bF^\dagger \diag\!\big(\bh_{\mathrm{full}}(\btheta)\big)\,\bF,
\end{equation}
which is normal, with eigenvalues given by the entries $\{h_k(\btheta)\}_{k=0}^{d-1}$ of $\bh_{\mathrm{full}}(\btheta)$.
Hence the operator norm equals the maximum eigenvalue magnitude:
\begin{equation}
\label{eq:lipschitz-1d-app}
\|T_{\btheta}\|_{2\to 2}
\;=\;
\max_{0\le k\le d-1} |h_k(\btheta)|
\;=\;
\max_{0\le k\le k_{\mathrm{half}}-1} \big|(\bh_{\mathrm{half}}(\btheta))_k\big|.
\end{equation}

\paragraph{Prior typical-case bound.}
Proposition~\ref{prop:prior-lip} (main text) shows that unusually large spectral norms are exponentially unlikely under the diagonal spectral prior. We prove it here.

\begin{proof}[Proof of Proposition~\ref{prop:prior-lip}]
Let $\mathcal{K}_{\mathrm{cplx}}$ denote the active non-self-conjugate bins and
$\mathcal{K}_{\mathrm{real}}$ the active self-conjugate bins (DC and, if applicable, Nyquist),
so $\mathcal{K}=\mathcal{K}_{\mathrm{cplx}}\cup\mathcal{K}_{\mathrm{real}}$ and $m=|\mathcal{K}|$.

For $k\in\mathcal{K}_{\mathrm{cplx}}$, the diagonal spectral prior draws
$h_k = A_k + iB_k$ with $A_k,B_k \stackrel{\text{iid}}{\sim}\mathcal{N}(0,S(k)/2)$.
Hence $|h_k|^2/S(k)\sim \mathrm{Exp}(1)$ and therefore
\begin{equation}
\Pr\!\big[|h_k|\ge t\big] \;=\; \exp\!\Big(-\frac{t^2}{S(k)}\Big).
\label{eq:complex-tail}
\end{equation}
For $k\in\mathcal{K}_{\mathrm{real}}$, we have $h_k\sim \mathcal{N}(0,S(k))$, so the standard Gaussian tail bound gives
\begin{equation}
\Pr\!\big[|h_k|\ge t\big] \;\le\; 2\exp\!\Big(-\frac{t^2}{2S(k)}\Big).
\label{eq:real-tail}
\end{equation}
Since $\exp(-t^2/S(k)) \le 2\exp(-t^2/(2S(k)))$ for all $t\ge 0$, both cases satisfy the uniform bound
\begin{equation}
\Pr\!\big[|h_k|\ge t\big] \;\le\; 2\exp\!\Big(-\frac{t^2}{2S(k)}\Big),
\qquad k\in\mathcal{K}.
\label{eq:uniform-tail}
\end{equation}

By a union bound over active bins,
\begin{align}
\Pr\!\Big[\max_{k\in\mathcal{K}} |h_k| \ge t\Big]
&\le \sum_{k\in\mathcal{K}} \Pr\!\big[|h_k|\ge t\big] \nonumber\\
&\le \sum_{k\in\mathcal{K}} 2\exp\!\Big(-\frac{t^2}{2S(k)}\Big) \nonumber\\
&\le 2m\,\exp\!\Big(-\frac{t^2}{2S_{\max}}\Big),
\label{eq:unionbound}
\end{align}
where $S_{\max}=\max_{k\in\mathcal{K}} S(k)$.
Finally, Eq.~\eqref{eq:lipschitz-1d-app} implies $\|T_{\btheta}\|_{2\to 2}=\max_{k\in\mathcal{K}} |h_k|$, yielding the stated tail bound.
Setting $t=\sqrt{2S_{\max}\log\!\frac{2m}{\delta}}$ gives $\Pr\!\big[\|T_{\btheta}\|_{2\to 2}\le t\big]\ge 1-\delta$.
\end{proof}

\begin{corollary}[Prior typical-case control of the product Lipschitz bound]
\label{cor:prior-net-lip}
Consider a network with $L$ spectral-circulant layers and 1-Lipschitz activations, and let $\widehat{\operatorname{Lip}}(f_{\btheta})$ denote the product upper bound defined above.
Assume the $L$ spectral layers have independent diagonal spectral priors, with active bin counts $m_\ell$ and maxima $S_{\ell,\max}$.
Then for any $\delta\in(0,1)$, with probability at least $1-\delta$ the following holds simultaneously for all spectral layers:
\[
\|T_\ell\|_{2\to 2} \le \sqrt{2S_{\ell,\max}\log\!\frac{2m_\ell L}{\delta}},
\qquad \ell=1,\ldots,L,
\]
and therefore
\[
\prod_{\ell=1}^L \|T_\ell\|_{2\to 2}
\;\le\;
\prod_{\ell=1}^L \sqrt{2S_{\ell,\max}\log\!\frac{2m_\ell L}{\delta}}.
\]
\end{corollary}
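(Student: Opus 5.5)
The plan is to apply Proposition~\ref{prop:prior-lip} separately to each spectral layer at a reduced confidence level $\delta/L$, then combine the $L$ per-layer events with a union bound. Independence across layers is not actually needed for this step; the union bound suffices. I mention the independence assumption only because it guarantees that each layer's marginal prior is exactly the diagonal spectral prior covered by the proposition.

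\emph{Per-layer bound.} Fix $\delta\in(0,1)$ and, for each $\ell$, set
\[
t_\ell=\sqrt{2S_{\ell,\max}\log\frac{2m_\ell L}{\delta}}.
\]
Proposition~\ref{prop:prior-lip} with $t=t_\ell$ gives
\[
\Pr\bigl[\|T_\ell\|_{2\to2}\ge t_\ell\bigr]\le 2m_\ell\exp\Bigl(-\frac{t_\ell^2}{2S_{\ell,\max}}\Bigr)=\frac{\delta}{L}.
\]
Equivalently, this is the second statement of the proposition applied with $\delta/L$ in place of $\delta$, which is still in $(0,1)$. If some $S_{\ell,\max}=0$, then $\|T_\ell\|=0$ almost surely, and the bound holds trivially.

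\emph{Union bound.} Let $E_\ell$ be the event $\{\|T_\ell\|_{2\to2}>t_\ell\}$. Then $\Pr[\bigcup_\ell E_\ell]\le\sum_\ell \delta/L=\delta$. So with probability at least $1-\delta$, every layer satisfies $\|T_\ell\|_{2\to2}\le t_\ell$ simultaneously.

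\emph{Product bound.} On this intersection event, all factors are nonnegative, so the per-layer inequalities multiply to give $\prod_\ell\|T_\ell\|_{2\to2}\le\prod_\ell t_\ell$. Since $\widehat{\operatorname{Lip}}(f_{\btheta})$ in Eq.~\eqref{eq:netlip} is a product over layers (with any dense factors treated as fixed multipliers), this controls the spectral part of the product bound.

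\emph{Main obstacle.} The argument is essentially bookkeeping, so the only delicate point is scope. The corollary concerns only the spectral-layer product. If dense layers appear in Eq.~\eqref{eq:netlip}, their norms enter only as extra deterministic (or separately controlled) factors and are not covered by this bound. I would state this explicitly rather than claim a bound on the full $\widehat{\operatorname{Lip}}$.
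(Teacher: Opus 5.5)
Your proposal is correct and matches the argument the paper intends: the paper gives no separate proof, but the $\log(2m_\ell L/\delta)$ form in the statement signals exactly this step of applying Proposition~\ref{prop:prior-lip} to each layer at level $\delta/L$ and taking a union bound over the $L$ layers. You are also right that cross-layer independence is unnecessary, since only each layer's marginal diagonal spectral prior is used, and that the conclusion controls only the spectral factors of $\widehat{\operatorname{Lip}}(f_{\btheta})$.
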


\paragraph{2D multi-channel BCCB layers.}
For 2D BCCB layers with multiple channels, the same principle holds frequencywise.
Let $\bX\in\R^{C_{\mathrm{in}}\times H\times W}$,
$\bY\in\R^{C_{\mathrm{out}}\times H\times W}$, and spectral kernel
$\bK_{\text{half}}
\in\C^{C_{\mathrm{out}}\times C_{\mathrm{in}}\times H\times (W/2+1)}$.
Using the full complex 2D DFT for clarity, flatten the input/output across spatial dimensions as
$\vx\in\R^{C_{\mathrm{in}}HW}$ and $\vy\in\R^{C_{\mathrm{out}}HW}$, and apply the 2D DFT channelwise via unitary maps
\begin{equation}
\mathcal{F}_{\mathrm{in}} := \bI_{C_{\mathrm{in}}}\otimes \bF_{2\mathrm{D}},
\qquad
\mathcal{F}_{\mathrm{out}} := \bI_{C_{\mathrm{out}}}\otimes \bF_{2\mathrm{D}},
\end{equation}
where $\bF_{2\mathrm{D}}$ is the unitary 2D DFT on the spatial grid.
In the frequency domain, the layer acts blockwise as
\begin{equation}
\widehat{\vy}_k = K_{\btheta}(k)\,\widehat{\vx}_k,
\qquad k\in\{0,\ldots,HW{-}1\},
\end{equation}
with channel-mixing matrices $K_{\btheta}(k)\in\C^{C_{\mathrm{out}}\times C_{\mathrm{in}}}$.
Stacking over $k$ yields a block-diagonal map
\begin{equation}
\widehat{\vy} = D_{\btheta} \widehat{\vx},
\qquad
D_{\btheta} = \mathrm{blockdiag}\big(K_{\btheta}(k_1),\ldots,K_{\btheta}(k_{HW})\big),
\end{equation}
and the spatial operator is
\begin{equation}
T_{\btheta} = \mathcal{F}_{\mathrm{out}}^\dagger D_{\btheta} \mathcal{F}_{\mathrm{in}}.
\end{equation}
Unitary pre-/post-multiplication preserves the operator norm, and the norm of a block-diagonal matrix is the maximum block norm, giving
\begin{align}
\label{eq:bccb-lip-main}
\|T_{\btheta}\|_{2\to 2}
&=
\|D_{\btheta}\|_{2\to 2}
=
\max_{k}\,\|K_{\btheta}(k)\|_{2\to 2}
=
\max_{(u,v)} \sigma_{\max}\!\big(K_{\btheta}(u,v)\big),
\end{align}
where $\sigma_{\max}$ denotes the largest singular value.
Hermitian symmetry constrains which frequencies are stored but does not change these singular values, so the maximum can be taken over the nonredundant RFFT$_2$ half-plane.
In practice, these per-frequency norms involve only small $C_{\mathrm{out}}\times C_{\mathrm{in}}$ matrices and can be computed exactly via SVD; a few power-iteration steps are a cheaper approximation when needed.

\paragraph{Network-level Lipschitz bounds.}
Consider a network $f_{\btheta}$ obtained by composing linear layers $\{T_{\ell,\btheta}\}$ and 1-Lipschitz activations (e.g.\ $\tanh$).
Ignoring biases (which are translations), a standard product bound yields
\begin{align}
\operatorname{Lip}(f_{\btheta})
&\le \widehat{\operatorname{Lip}}(f_{\btheta}) \\
\widehat{\operatorname{Lip}}(f_{\btheta})
&:=
\prod_{\ell \in \mathcal{L}_{\text{spec}}}
\biggl(
  \max_{(u,v)} \sigma_{\max}\bigl(K_{\ell,\btheta}(u,v)\bigr)
\biggr)
\cdot
\prod_{\ell \in \mathcal{L}_{\text{dense}}}
\|\bW_\ell(\btheta)\|_2 .
\end{align}
Here $\mathcal{L}_{\text{spec}}$ and $\mathcal{L}_{\text{dense}}$ denote the sets of spectral BCCB and dense layers, respectively.
For spectral circulant (1D) layers, the corresponding factor is $\|T_{\ell,\btheta}\|_{2\to 2}=\max_k |h_{\ell,k}(\btheta)|$ (Eq.~\eqref{eq:lipschitz-1d-app} applied to layer $\ell$).

\paragraph{Margin-based $\ell_2$ certificates.}
Given logits $f_{\btheta}(\vx)\in\R^K$ and a true label $y$, define the logit margin
\begin{equation}
m_{\btheta}(\vx,y)
=
f_{\btheta}(\vx)_y - \max_{k\neq y} f_{\btheta}(\vx)_k.
\end{equation}
A standard Lipschitz--margin argument then yields a global $\ell_2$ robustness certificate \citep{tsuzuku2018lipschitz,hein2017formal}
\begin{equation}
r_{\mathrm{cert}}(\vx,y;\btheta)
\;=\;
\frac{\max\{m_{\btheta}(\vx,y),0\}}{2\,\widehat{\operatorname{Lip}}(f_{\btheta})}.
\end{equation}
For any perturbation $\vdelta$ with $\|\vdelta\|_2 \le r_{\mathrm{cert}}(\vx,y;\btheta)$, the predicted label cannot change.
In our Bayesian experiments we sample $\btheta\sim q(\btheta)$, compute $m_{\btheta}(\vx,y)$ and $r_{\mathrm{cert}}(\vx,y;\btheta)$ on the test set, and study their empirical posterior distributions (Sec.~\ref{sec:exp-frozen}).
These certificates are global and $\ell_2$-based; they are intended as a diagnostic of the geometry induced by spectral heads rather than a full adversarial-robustness claim.

\section{Experimental Details}
\label{app:exp-details}

\subsection{OOD Evaluation Protocol}
\label{app:ood-protocol}

We evaluate out-of-distribution (OOD) detection using predictive probabilities
$p_{\btheta}(\cdot\mid \vx)\in\Delta^{K-1}$ over $K$ classes. All logarithms are natural (nats).

\paragraph{Predictive entropy.}
\[
H\!\big(p_{\btheta}(\cdot\mid \vx)\big)
\;=\;
-\sum_{k=1}^{K} p_{\btheta}(y{=}k\mid \vx)\,\log p_{\btheta}(y{=}k\mid \vx).
\]

\paragraph{ID metrics.}
Negative log-likelihood (NLL), Brier, and calibration follow
\citet{guo2017calibration}:
\[
\begin{aligned}
&\text{NLL} \;=\; -\frac{1}{N}\sum_{i=1}^{N}\log p_{\btheta}(y_i\mid \vx_i) \\
& \text{Brier} \;=\; \frac{1}{N}\sum_{i=1}^{N}\big\|p_{\btheta}(\cdot\mid \vx_i)-\mathbf{e}_{y_i}\big\|_2^2
\end{aligned}
\]
\[
\begin{aligned}
& \text{ECE} \;=\; \sum_{m=1}^{M}\frac{|B_m|}{N}\,\big|\text{acc}(B_m)-\text{conf}(B_m)\big| \\
& \text{MCE}\;=\;\max_m \big|\text{acc}(B_m)-\text{conf}(B_m)\big|.
\end{aligned}
\]
Here $B_m$ are $M{=}15$ equal-width confidence bins in $[0,1]$, $\text{conf}(B_m)$ is
the average $\max_k p_{\btheta}(y{=}k\mid \vx)$ in the bin, and $\text{acc}(B_m)$ is the
fraction correct in that bin.

\paragraph{OOD scores and thresholds.}
Following standard practice \citep{hendrycks2017baseline} and the survey in
\citet{lu2024recentood}, we use \emph{negative entropy} as an ID score:
$s(\vx) = -H\big(p_{\btheta}(\cdot\mid \vx)\big)$ (larger $\Rightarrow$ more ID-like).
\[
\text{AUROC} \;=\; \text{AUC}\big(\{(s(\vx_i^{\text{ID}}),1)\}\cup\{(s(\vx_j^{\text{OOD}}),0)\}\big),
\]
\[
\begin{aligned}
& \text{FPR@95\%TPR} \;=\;  \Pr_{\vx^{\text{OOD}}}\!\big[s(\vx^{\text{OOD}})\ge t_{0.95}\big] \\
&\text{where } t_{0.95} 
\text{ is the }5\text{th percentile of } s(\vx^{\text{ID}}).
\end{aligned}
\]

\paragraph{Entropy KDE plots.}
We display kernel density estimates of $H$ on $[0,\ln K]$, using boundary
reflection and Scott’s bandwidth; figures report both full range and a zoomed
window (default $[0,0.12]$). 

\subsection{Training Setup and Hyperparameters}
\label{app:training-setup}

\textbf{Datasets.} From scratch: MNIST (ID) and Fashion-MNIST (OOD). Transfer: CIFAR-10 (ID) and CIFAR-10-C (OOD) with severities 1--5, using a frozen encoder. For the deterministic transformer experiment we train a small Vision Transformer (ViT) end-to-end on CIFAR-10.

\textbf{Backbones.} From scratch (images): a single spectral or spatial layer $\to$ \texttt{tanh} $\to$ linear head. We compare:
Spectral BCCB (ours), BCCB (spatial kernel), Conv2D ($k{=}3$), Spectral Circulant (1D on flattened input), Circulant (1D, spatial), and Dense.
For transfer (frozen features), a fixed CIFAR-10 encoder (ResNet) produces pooled features of dimension $2048$ which feed a Bayesian head:
\[
\underbrace{\RFFT\text{Circulant1D}(2048,\,K)}_{\text{spectral head}} 
\;\to\; \tanh 
 \;\to\; \text{Linear}(2048,\,10).
\]
We vary the active half-spectrum size $K\in\{d/2{+}1,\,768,\,512,\,256,\,128,\,64\}$ with $d{=}2048$.
For the ViT experiment (Sec .~\ref {sec:exp-vit}), we use a deliberately small ViT configuration; our goal is not to compete with state-of-the-art CIFAR-10 models but to test whether spectral circulant projections behave reasonably under a standard transformer training recipe. In the spectral variant, we replace all square linear projections (e.g.\ $d_{\text{model}}\!\to\!d_{\text{model}}$ in $Q/K/V$ and output projections) by SpectralCirculant1d layers, keeping all other architectural hyperparameters fixed.

\textbf{Training.} Bayesian models (MNIST/FMNIST and frozen-feature heads) are trained with SVI (reparameterized Gaussian) using Adam ($\eta{=}10^{-2}$), three seeds, and unitary FFTs. From scratch (MNIST/FMNIST) we optimize for $1000$ steps. Transfer heads use the same optimizer and schedule across all head architectures. All spectral SVI models use the same Hermitian-aware low-rank guide with rank $r{=}8$. For the ViT experiment, we train both the dense and spectral variants with the same deterministic recipe: Adan (ADAptive Nesterov momentum; \citealt{xie2022adan}) with cosine decay and 5-epoch warmup, peak learning rate $10^{-3}$, weight decay $10^{-4}$, batch size $128$, and up to $200$ epochs. We select the best checkpoint for each model by validation loss and report test metrics at that checkpoint.

\textbf{Metrics.} ID: Accuracy, NLL, Brier, ECE (10 bins), MCE. OOD: AUROC and FPR@95\%TPR from predictive entropy. For the transfer experiment, we report per-severity summaries and highlight severity level 5 (S5). For the ViT comparison, we report test accuracy, NLL, and parameter counts for the projection layers.

\subsection{Parameter Counts}
\label{app:param-counts}

Let $D{=}H\!\times\!W$ with $H{=}W{=}28$ ($D{=}784$). Weights exclude biases; biases shown in parentheses.
\begin{table}[H]
\centering
\small
\caption{Parameter counts on MNIST ($D{=}784$).}
\label{tab:param_counts_mnist}
\begin{threeparttable}
\begin{tabular}{@{}l r r r@{}}
\toprule
Model & Weights & Biases & Total \\
\midrule
SpectralCirculant & 8{,}624 & 11 & 8{,}635 \\
SpectralBCCB ($C_{\text{in}}{=}C_{\text{out}}{=}1$) & 8{,}624 & 11 & 8{,}635 \\
BCCB (spatial) & 8{,}624 & 11 & 8{,}635 \\
Conv2D ($C_{\text{out}}{=}8$, $k{=}3$) & 62{,}792 & 18 & 62{,}810 \\
Dense ($D\!\to\!D\!\to\!10$) & 622{,}496 & 794 & 623{,}290 \\
\bottomrule
\end{tabular}
\begin{tablenotes}[para,flushleft]
\footnotesize
Conv2D’s large flattened head ($8HW\!\to\!10$) dominates its parameters; Dense is $\sim$72$\times$ larger than spectral models.
\end{tablenotes}
\end{threeparttable}
\end{table}

\subsection{Additional MNIST $\rightarrow$ Fashion-MNIST Results}
\label{app:mnist-details}

Under matched budgets (1000 SVI steps), spectral models separate ID/OOD entropies better than Conv2D at much smaller capacity; Dense remains strongest but is not parameter-matched.

\begin{table*}[t]
\centering
\scriptsize
\setlength{\tabcolsep}{3pt}
\caption{MNIST (ID) \& Fashion-MNIST (OOD). Mean $\pm$ std over 3 seeds. OOD metrics from predictive entropy.}
\label{tab:mnist_main}
\begin{tabular}{lcccccc}
\toprule
Model & Acc. & Brier & ECE & MCE & AUROC & FPR@95 \\
\midrule
SpectralBCCB      & $0.919 \pm 0.003$ & $0.124 \pm 0.005$ & $0.016 \pm 0.005$ & $0.179 \pm 0.081$ & $0.8112 \pm 0.0245$ & $0.6156 \pm 0.0328$ \\
Conv2D            & $0.962 \pm 0.002$ & $0.076 \pm 0.005$ & $0.038 \pm 0.003$ & $0.573 \pm 0.069$ & $0.6205 \pm 0.0545$ & $0.7220 \pm 0.1163$ \\
BCCB              & $0.899 \pm 0.003$ & $0.182 \pm 0.007$ & $0.083 \pm 0.004$ & $0.373 \pm 0.008$ & $0.7296 \pm 0.0347$ & $0.8436 \pm 0.0178$ \\
SpectralCirculant & $0.921 \pm 0.005$ & $0.120 \pm 0.005$ & $0.022 \pm 0.003$ & $0.195 \pm 0.060$ & $0.8293 \pm 0.0242$ & $0.6369 \pm 0.0296$ \\
Circulant         & $0.903 \pm 0.006$ & $0.175 \pm 0.012$ & $0.081 \pm 0.005$ & $0.426 \pm 0.036$ & $0.7811 \pm 0.0190$ & $0.8128 \pm 0.0091$ \\
Dense             & $0.974 \pm 0.001$ & $0.048 \pm 0.001$ & $0.024 \pm 0.001$ & $0.623 \pm 0.028$ & $0.8397 \pm 0.0102$ & $0.5023 \pm 0.0395$ \\
\bottomrule
\end{tabular}
\end{table*}

\begin{figure}[H]
  \centering

  \begin{subfigure}{0.48\linewidth}
    \centering
    \includegraphics[width=\linewidth]{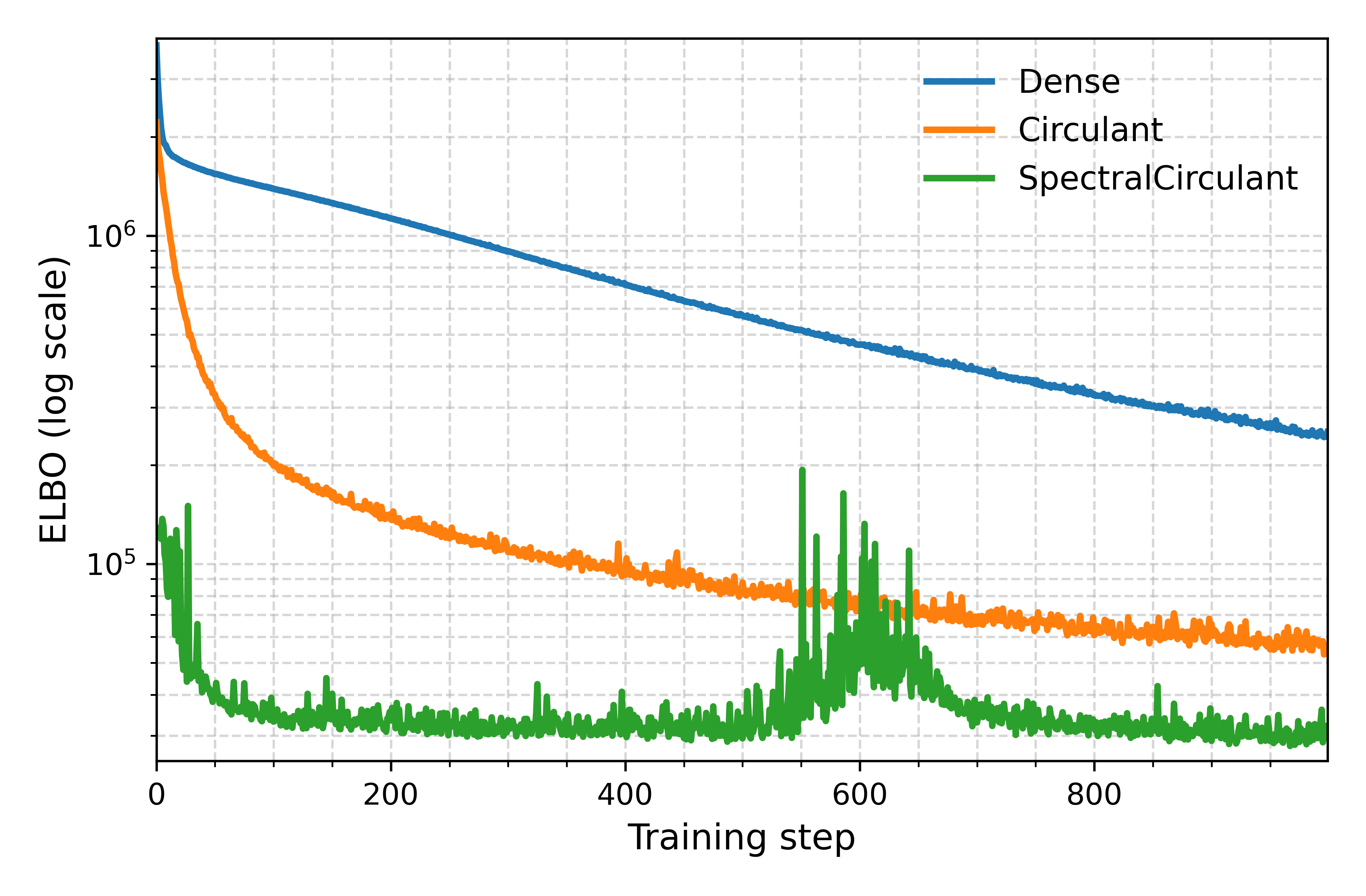}
    \caption{MNIST, 1D models.}
    \label{fig:mnist_elbo_1d}
  \end{subfigure}
  \hfill
  \begin{subfigure}{0.48\linewidth}
    \centering
    \includegraphics[width=\linewidth]{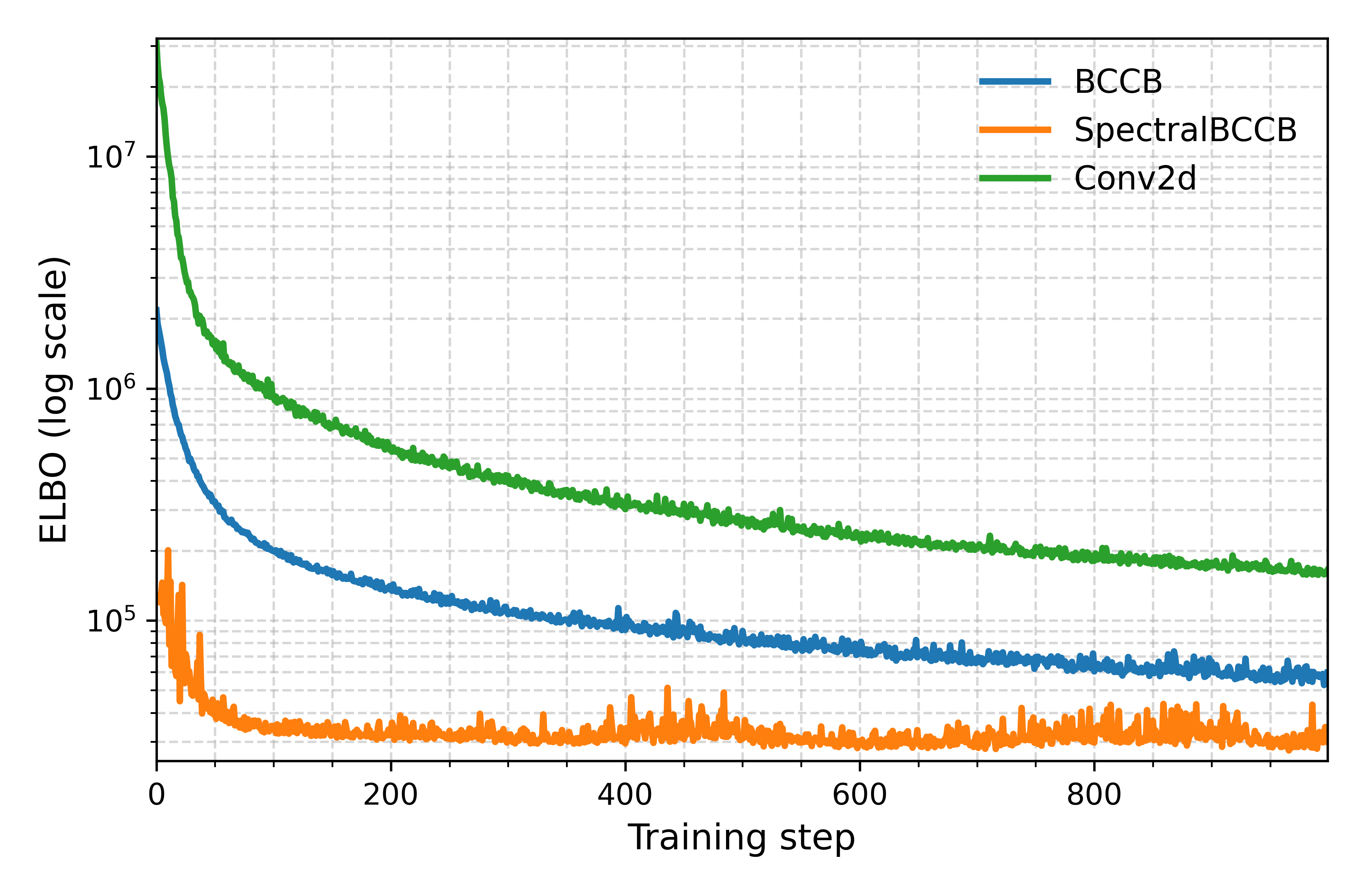}
    \caption{MNIST, 2D models.}
    \label{fig:mnist_elbo_2d}
  \end{subfigure}

  \caption{Training dynamics with log-scaled $y$-axis. 
  Negative ELBO / loss for MNIST (1D and 2D models). 
  SpectralBCCB stabilizes earlier in all settings.}
  \label{fig:training_dynamics}
\end{figure}

\subsection{Deep-kernel Heads: Spectrum Ablation and Geometry}
\label{app:dk-ablation}

We freeze a pretrained CIFAR-10 encoder (pooled features of dimension $2048$) and train SVI heads: Dense ($h\in\{32,128\}$), Circulant (spatial), and \textbf{Spectral Circulant (RFFT)} with active half-spectrum size $K$.

\paragraph{Summary.}
Dense heads set a strong reference on NLL/ECE and OOD AUROC. Spectral heads \emph{track ID accuracy closely at substantially lower parameter counts} and \emph{degrade gracefully} as $K$ shrinks.

\begin{table}[t]
\centering
\scriptsize
\setlength{\tabcolsep}{2pt} 
\caption{Frozen CIFAR-10 features $\rightarrow$ SVI heads on CIFAR-10 (ID) and CIFAR-10-C (OOD). AUROC/FPR are averaged over severities S1--S5; severity 5 (S5) is shown separately.}
\label{tab:frozen_head}
\begin{tabular}{lcccccc}
\toprule
Model & Acc & NLL & ECE & mean AUROC & AUROC S5 & FPR@95 S5 \\
\midrule
Dense-32 (SVI)     & 0.971 & \textbf{0.098} & \textbf{0.009} & 0.6986 & 0.7819 & \textbf{0.5727} \\
Dense-128 (SVI)    & 0.971 & 0.118         & 0.012          & \textbf{0.7076} & \textbf{0.7946} & 0.5648 \\
RFFT (SVI) -- Full & 0.971 & 0.136         & 0.017          & 0.6816 & 0.7165 & 0.5966 \\
Circulant (SVI)    & 0.965 & 0.433         & 0.026          & 0.6400 & 0.7182 & 0.6323 \\
\bottomrule
\end{tabular}%
\end{table}

\paragraph{Entropy curves at severity 5.}
\begin{figure}[H]
  \centering
  \includegraphics[width=\linewidth]{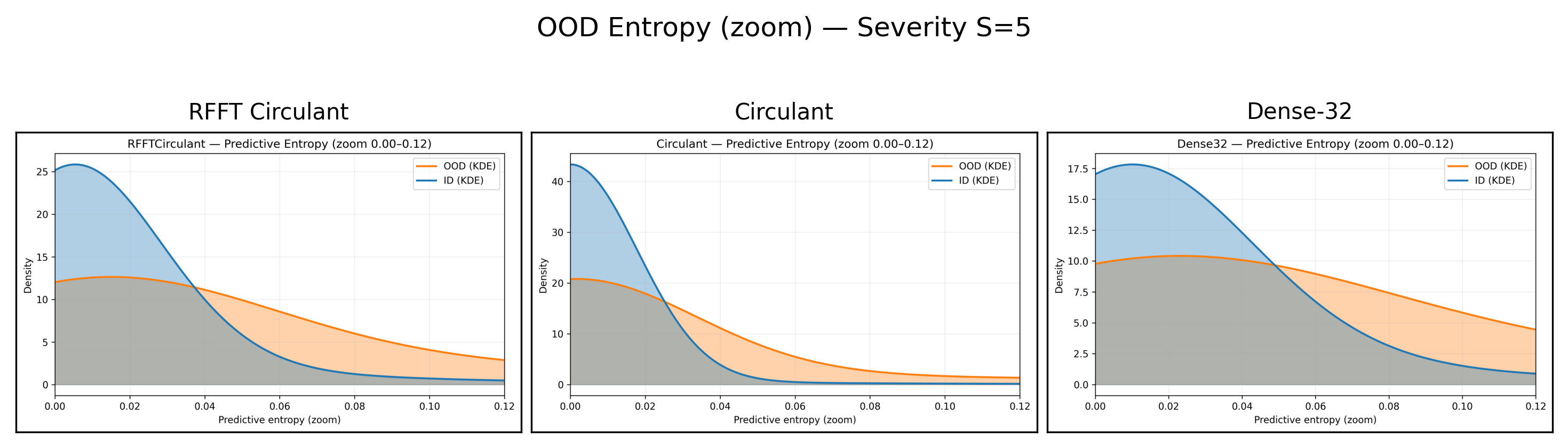}
  \caption{Predictive-entropy KDEs on CIFAR-10 (ID, blue) vs CIFAR-10-C (OOD, severity~5, orange) for frozen-feature heads.}
  \label{fig:ood_entropy_s5}
\end{figure}

\paragraph{Active spectrum ablation with parameter counts.}
For $d{=}2048$, the spectral head’s weight degrees of freedom are
\[
\#\text{spec-weights}(K)=
\begin{cases}
2K-2,& K=d/2{+}1,\\
2K-1,& \text{otherwise},
\end{cases}
\]
and the classifier head adds $2048\times 10=20{,}480$ weights. We report weights \emph{excluding} biases.

\begin{table}[t]
\centering
\scriptsize
\setlength{\tabcolsep}{2pt}
\caption{RFFT head ablation on frozen CIFAR-10 features (CIFAR-10 ID, CIFAR-10-C OOD). Parameter counts include spectral head $+$ linear classifier (weights only). Compression is relative to Dense-32 (65{,}856 weights).}
\label{tab:k_ablation_params}
\begin{tabular}{lccccc}
\toprule
$K$ & Weights & Comp.\ vs Dense-32 ($\times$) & Acc & mean AUROC & FPR@95 (S5) \\
\midrule
$d/2{+}1=1025$ & 22,528 & 2.92 & 0.971 & 0.6816 & 0.5966 \\
768            & 22,015 & 2.99 & 0.967 & 0.6846 & 0.6102 \\
512            & 21,503 & 3.06 & 0.967 & 0.6710 & 0.6264 \\
256            & 20,991 & 3.14 & 0.967 & 0.6664 & 0.6300 \\
128            & 20,735 & 3.18 & 0.965 & 0.6650 & 0.6447 \\
64             & 20,607 & 3.20 & 0.964 & 0.6615 & 0.6512 \\
\bottomrule
\end{tabular}%
\end{table}

\paragraph{Lipschitz geometry and robustness.}
For Dense-32 and the full RFFT head, we also inspect the distributions of
$\widehat{\operatorname{Lip}}(f_{\btheta})$, posterior margins $m_{\btheta}(\vx,y)$, and
certified radii $r_{\mathrm{cert}}(\vx,y;\btheta)$ on CIFAR-10 test features.
The spectral head yields a substantially smaller global Lipschitz bound
($\approx$200 vs.\ $\approx$5200) and \emph{larger} certified radii in feature space,
reflecting a smoother dependence on the representation.

\begin{figure}[H]
  \centering
  \begin{subfigure}{0.48\linewidth}
    \centering
    \includegraphics[width=\linewidth]{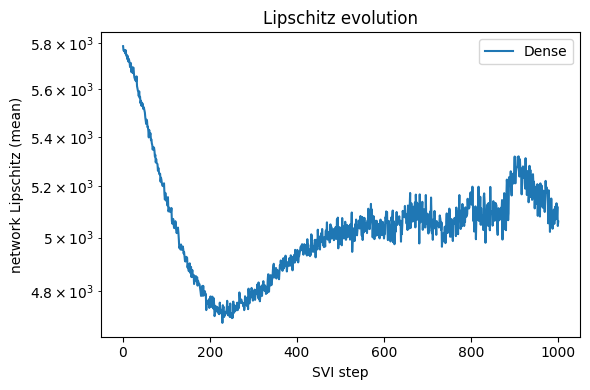}
    \caption{Dense head}
    \label{fig:lip_evolution_dense}
  \end{subfigure}\hfill
  \begin{subfigure}{0.48\linewidth}
    \centering
    \includegraphics[width=\linewidth]{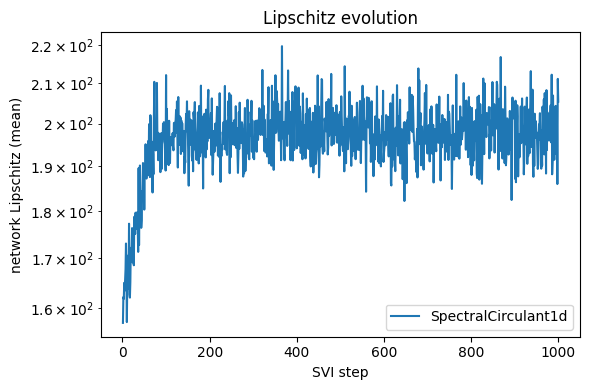}
    \caption{SpectralCirculant1d head}
    \label{fig:lip_evolution_rfft}
  \end{subfigure}
  \caption{
    Training dynamics of the global Lipschitz upper bound
    $\widehat{\operatorname{Lip}}(f_{\btheta})=\prod_{\ell=1}^{L}\|\bW_\ell(\btheta)\|_2$
    for Bayesian heads on frozen CIFAR-10 features.
    The SpectralCirculant1d head stays in a substantially lower Lipschitz regime
    throughout SVI compared to the Dense head.
  }
  \label{fig:lip_evolution_dk}
\end{figure}

\begin{figure}[H]
  \centering
  \includegraphics[width=0.6\linewidth]{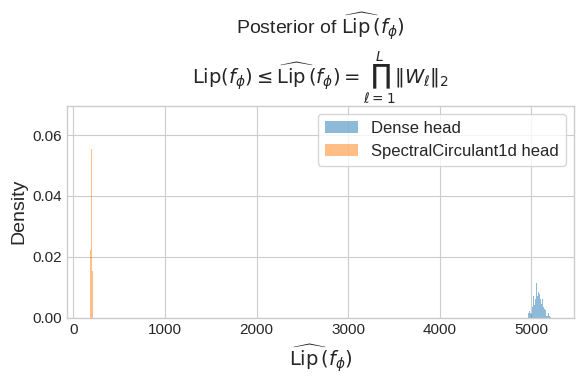}
  \caption{
    Posterior distributions of the global Lipschitz upper bound
    $\widehat{\operatorname{Lip}}(f_{\btheta})$ for Dense vs SpectralCirculant1d
    heads in the deep-kernel setting (frozen CIFAR-10 encoder).
    Despite similar predictive performance, the spectral head yields much
    smaller Lipschitz constants, reflecting a smoother dependence on the
    frozen representation.
  }
  \label{fig:lip_posterior_dk}
\end{figure}
\begin{figure}[t]
  \centering
  \includegraphics[width=0.5\linewidth]{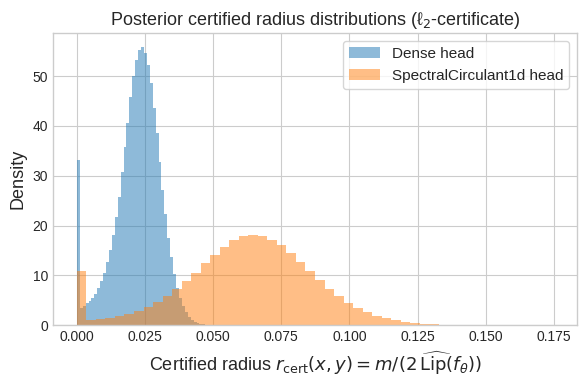}
  \caption{Frozen CIFAR-10 features: spectral-circulant SVI heads yield substantially larger certified $\ell_2$ radii due to smaller global Lipschitz bounds $\widehat{\operatorname{Lip}}(f_{\btheta})$.}
  \label{fig:dk-geometry-main}
\end{figure}

\section{Vision Transformer Experiments}
\label{app:vit-details}

We next ask whether the same spectral circulant primitives are useful in a modern transformer architecture trained deterministically. We consider a compact ViT encoder with patch embedding, multi-head self-attention, and an MLP block per layer. The \emph{dense} baseline uses standard linear projections for $Q$, $K$, $V$, the attention output, and all MLP layers. The \emph{spectral} variant keeps the architecture and training protocol fixed, and replaces all \emph{square} linear projections ($d_{\text{model}}\!\to\!d_{\text{model}}$) by \texttt{SpectralCirculant1d} layers; non-square projections (e.g.\ patch embedding and classifier head) remain dense.

Both models are trained with the same optimizer and schedule described in Sec.~\ref{sec:experiments-main}: Adan with warmup--cosine decay, peak learning rate $10^{-3}$, weight decay $10^{-4}$, batch size $128$, and up to $200$ epochs. For each configuration we select the checkpoint with the lowest validation loss.

\paragraph{Training dynamics and performance on CIFAR-10.}
On CIFAR-10 we use $32{\times}32$ images, patch size $4$, and a ViT with $d_{\text{model}}{=}128$ as described above. Under the shared training recipe, both the dense and spectral models converge stably. The spectral variant attains its best validation loss later in training (epoch 30 vs.\ 16 for the dense model), but we do not observe optimization instabilities.

On the CIFAR-10 test set, the best checkpoints achieve
\[
\begin{aligned}
\text{Dense ViT (CIFAR-10):}\quad
  & \text{acc}=0.6851,\;\text{NLL}=0.9660,\\
\text{Spectral ViT (CIFAR-10):}\quad
  & \text{acc}=0.7157,\;\text{NLL}=0.8782.
\end{aligned}
\]
Under identical hyperparameters, the spectral circulant ViT therefore attains higher accuracy and lower negative log-likelihood than the dense baseline.

\paragraph{Tiny ImageNet.}
To test whether this behaviour is specific to CIFAR-10, we repeat the experiment on Tiny ImageNet (200 classes, $64{\times}64$ images), using patch size $8$ so that the number of tokens per image matches the CIFAR-10 setup. We keep the architecture and training hyperparameters fixed and only change the number of classes. Under this matched recipe, both models again converge reliably, and the spectral ViT prefers a substantially later best-validation epoch: epoch~13 for the dense model vs.\ epoch~58 for the spectral model.

On Tiny ImageNet, the dense ViT achieves $30.7\%$ top-1 accuracy and an NLL of $3.06$ with $1.25$M parameters, while the spectral ViT attains $34.4\%$ top-1 accuracy and an NLL of $2.94$ with $0.85$M parameters. While these absolute numbers are below heavily tuned CNN and transformer baselines reported in the literature, they are consistent with small ViT models trained from scratch on Tiny ImageNet without pretraining or aggressive regularization
\citep[e.g.][]{ma2017tinyimagenet,cschoeller_vit_tinyimagenet}. Across both datasets the spectral variant selects a substantially later best-validation epoch than the dense model, which is consistent with the hypothesis that the spectral parametrization delays overfitting and allows the model to continue improving for longer under the same schedule. The qualitative trend mirrors our CIFAR-10 results: spectral circulant projections yield better likelihood and higher accuracy at a reduced parameter budget under a shared, untuned training recipe.

\paragraph{Parameter counts.}
Replacing each $d_{\text{model}}\!\times\!d_{\text{model}}$ linear map by a circulant matrix reduces the number of parameters in those projections from $O(d_{\text{model}}^2)$ to $O(d_{\text{model}})$. In our CIFAR-10 configuration the total parameter counts are
\[
\begin{aligned}
& \text{Dense ViT (CIFAR-10): } 1{,}205{,}898 \text{ parameters} \\
& \text{Spectral ViT (CIFAR-10): } 811{,}218 \text{ parameters},
\end{aligned}
\]
while in the Tiny ImageNet setting (with a larger classifier head) we obtain
\[
\begin{aligned}
& \text{Dense ViT (Tiny): } 1{,}248{,}840 \text{ parameters} \\
& \text{Spectral ViT (Tiny): } 854{,}160 \text{ parameters}.
\end{aligned}
\]
In both cases the spectral variant uses roughly $1.4$--$1.5\times$ fewer parameters overall. Table~\ref{tab:vit-main} summarizes performance and model size across datasets.
\begin{figure}[t]
  \centering
  \includegraphics[width=0.5\linewidth]{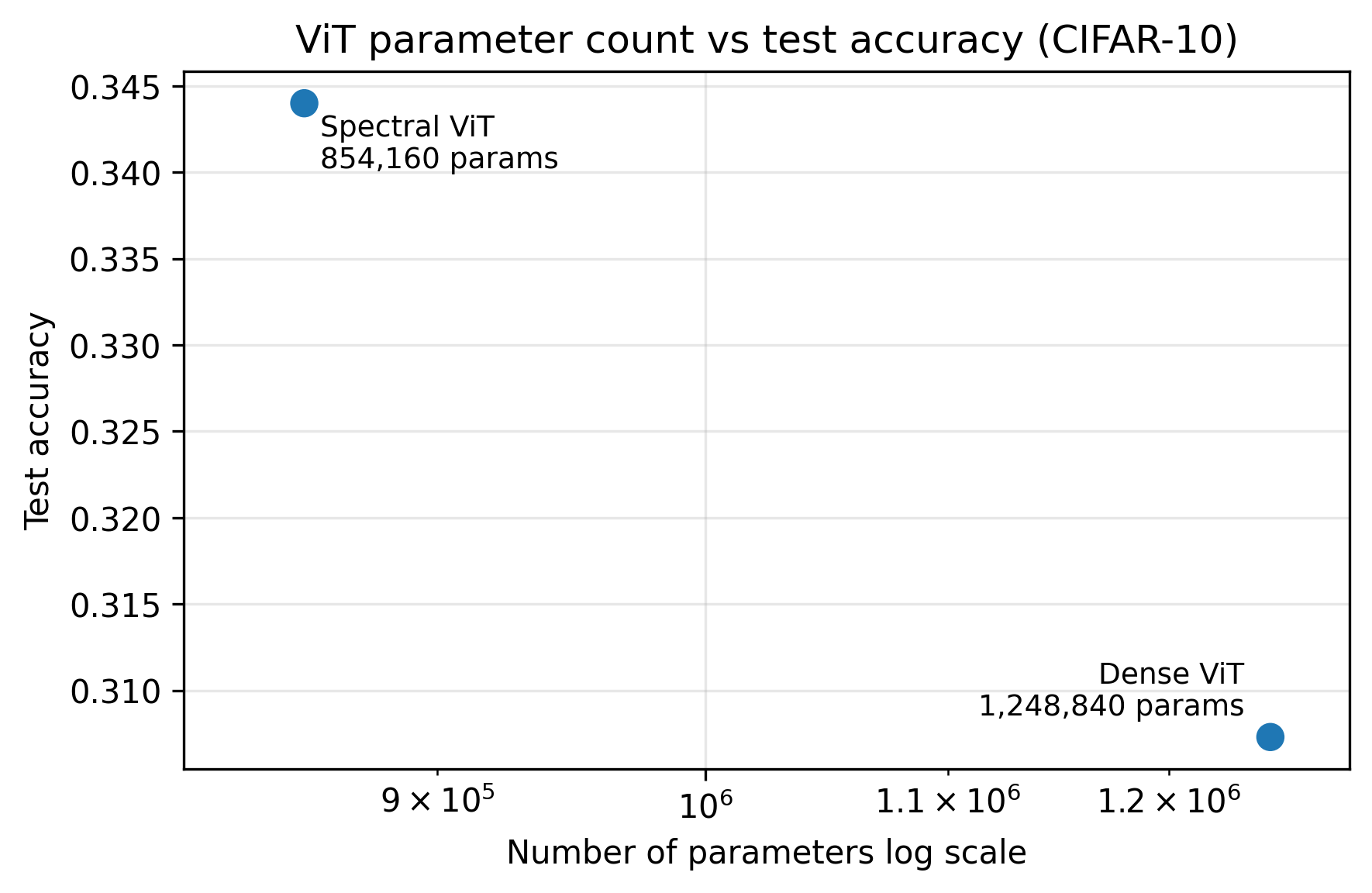}
  \caption{Tiny ImageNet test accuracy vs parameter count.}
  \label{fig:vit-acc-params}
\end{figure}

Overall, these experiments show that spectral circulant layers are not restricted to small Bayesian networks or probabilistic heads: they can be used as drop-in replacements for dense projections in ViT-like architectures, trained with standard first-order optimizers, and provide a favourable accuracy--parameter trade-off without architecture-specific hyperparameter tuning.

\paragraph{Curvature diagnostics.}
\label{app:curvature-details}
To better understand how the spectral parametrization affects optimization geometry in the attention projections, we estimate dominant curvature directions using Hessian--vector products and Gauss--Newton vector products, following standard practice for curvature diagnostics in deep networks \citep{ghorbani2019hessian}. We report the resulting dominant eigenvalues both globally and in an attention-only subspace (restricted to the $Q/K/V$ and output projection parameters) at the final selected checkpoints.
Using Hessian--vector products and a small number of power-iteration steps, following standard practice for curvature diagnostics in deep networks \citep{ghorbani2019hessian}, we approximate the dominant eigenvalues of the Hessian and Gauss--Newton matrices of the validation loss on a small held-out batch at the final checkpoints. Table~\ref{tab:vit_eigs} reports results for both CIFAR-10 and Tiny ImageNet, considering the full parameter space and the same attention-only subspace described above.

On CIFAR-10, in the attention subspace the spectral ViT has a much smaller dominant (largest-magnitude) Hessian eigenvalue than the dense model (approximately $8.5\times$ smaller), and an even more pronounced reduction in the top Gauss--Newton eigenvalue (about $21\times$). This indicates substantially flatter curvature around the optimum in the directions where \texttt{SpectralCirculant1d} layers are applied. When all parameters are included, the spectral ViT exhibits moderately larger dominant eigenvalues (roughly $1.6\times$ for both Hessian and Gauss--Newton), suggesting that sharp directions are less concentrated in attention and more concentrated in the remaining dense layers (MLPs and classifier head). The same qualitative pattern appears on Tiny ImageNet: spectral projections yield significantly smaller dominant eigenvalues in the attention subspace (about $25\times$ smaller for the Hessian and $16\times$ for Gauss--Newton), while the global top eigenvalues are larger for the spectral model, reflecting stronger curvature elsewhere in the network.

These diagnostics are local and parameterization-dependent, and we use them only for relative comparison between the two ViT variants. Taken together with the improved test accuracy and reduced parameter counts in Table~\ref{tab:vit-main}, they support the view that spectral circulant projections offer a favourable compactness--geometry trade-off in transformer architectures.
\begin{table}[t]
  \centering
  \caption{ViT curvature diagnostics at the selected final checkpoints.
    $\lambda_{\mathrm{dom}}(H)$ denotes the dominant Hessian eigenvalue in magnitude (reported with its sign; negative values indicate a direction of negative curvature), and $\lambda_{\max}(\mathrm{GN})$ is the largest Gauss--Newton eigenvalue (nonnegative by construction). We report results for the full parameter space (Global) and an attention-only subspace restricted to the $Q/K/V$ and output projection parameters.}
    
  \label{tab:vit_eigs}
  \begin{adjustbox}{max width=\columnwidth}
    \scriptsize
    \setlength{\tabcolsep}{2pt}
    \begin{tabular}{llcccc}
    \toprule
    Dataset & Subspace &
    \multicolumn{2}{c}{Dense ViT} &
    \multicolumn{2}{c}{Spectral ViT} \\
    & &
    $\lambda_{\mathrm{dom}}(H)$ & $\lambda_{\max}(\text{GN})$ &
    $\lambda_{\mathrm{dom}}(H)$ & $\lambda_{\max}(\text{GN})$ \\
    \midrule
    CIFAR-10      & Global         & 12.3 & 1621  & 19.6   & 2511  \\
                  & Attention-only &  6.3 &  858  &  0.75  &   40.9 \\
    \midrule
    Tiny ImageNet & Global         & 21.9 & 2270  & $-51.4$ & 9204  \\
                  & Attention-only &  9.3 & 1132  &  0.38  &   70.1 \\
    \bottomrule
    \end{tabular}
  \end{adjustbox}
\end{table}

\end{document}